\documentclass[12pt]{article}
\usepackage{booktabs} 
\usepackage{amsmath,amsthm,amssymb}
\usepackage{algorithm}
\usepackage{algpseudocode}
\usepackage{graphicx}
\usepackage{subfig}
\theoremstyle{plain}
\newtheorem{theorem}{Theorem}
\newtheorem{lemma}[theorem]{Lemma}

\theoremstyle{definition}

\usepackage{paralist}
\usepackage{color}
\usepackage{verbatim} 


\newcommand{\R}{\mathbb{R}}
\newcommand{\eps}{\epsilon}


\newcommand{\aset}[1]{\{ #1 \}}

\newcommand{\rnote}[1]{}


\newcommand{\transpose}{{\rm T}}
\newcommand{\tran}{\transpose}

\newcommand{\twonorm}[1]{\left\lVert #1 \right\rVert_{2}}
\newcommand{\twotwonorm}[1]{\left\lVert #1 \right\rVert_{2}^2}
\newcommand{\fronorm}[1]{\left\lVert #1 \right\rVert_{F}}
\newcommand{\tP}{{\tilde P}}

\newcommand{\tU}{\tilde{U}}

\newcommand{\tV}{\tilde{V}}

\newcommand{\tsig}{\tilde{\sigma}}

\newcommand{\tA}{\tilde{A}}
\newcommand{\tZ}{\tilde{Z}}
\newcommand{\tq}{\tilde{q}}
\newcommand{\tp}{\tilde{p}}

\newcommand{\cN}{\mathcal{N}}

\graphicspath{{fig/}}

\begin{document}
\title{Subspace Approximation for Approximate Nearest Neighbor Search in NLP}

\author{Jing Wang \thanks{jw998@rutgers.edu}}

\maketitle

\begin{abstract}
	Most natural language processing tasks can be formulated as the approximated nearest neighbor search problem, such as word analogy, document similarity, machine translation. Take the question-answering task as an example, given a question as the query, the goal is to search its nearest neighbor in the training dataset as the answer. However, existing methods for approximate nearest neighbor search problem may not perform well owing to the following practical challenges: 1) there are noise in the data; 2) the large scale dataset yields a huge retrieval space and high search time complexity. 
	
	In order to solve these problems, we propose a novel approximate nearest neighbor search framework which i) projects the data to a subspace based spectral analysis which eliminates the influence of noise; ii) partitions the training dataset to different groups in order to reduce the search space. Specifically, the retrieval space is reduced from $O(n)$ to $O(\log n)$ (where $n$ is the number of data points in the training dataset). We prove that the retrieved nearest neighbor in the projected subspace is the same as the one in the original feature space. We demonstrate the outstanding performance of our framework on real-world natural language processing tasks.
\end{abstract}

\section{Introduction}
Artificial intelligence (AI) is a thriving field with active research topics and practical products, such as Amazon's Echo, Goolge's home smart speakers and Apple's Siri. The world becomes enthusiastic to communicate with these intelligent products. Take Amazon Echo for example,  you can ask Echo the calories of every food in your plate if you are on diet. Whenever you need to check your calendar, just ask ``Alexa, what's on my calendar today?'' It boosts the development of natural language processing which refers to the AI technology that makes the communication between AI products and humans with human language possible. It is shown that the communication between human and AI products are mainly in the format of question answering (QA). QA is a complex and general natural language task. Most of natural language processing tasks can be treated as question answering problem, such as word analogy task \cite{mikolov2013linguistic}, machine translation \cite{wu2016google}, named entity recognition (NER) \cite{liu2011recognizing,passos2014lexicon}, part-of-speech tagging (POS) \cite{kumar2016ask}, sentiment analysis \cite{socher2013recursive}.

There are many works designed for the question answering task, such as deep learning models \cite{kumar2016ask}
, information extraction systems \cite{yates2007textrunner}. In this work, we propose to solve the question answering task by the approximate nearest neighbor search method.  Formally, given the question as a query $q\in R^d$, the training data set $P=\{p_1,\cdots,p_n\}\in R^{d\times n}$, the nearest neighbor search aims to retrieve the nearest neighbor of the query $q$, denoted as $p^{\ast}$ from $P$ as the answer. We assume that $p^{\ast}$ is within distance 1 from the query $q$, and all other points are at distance at least 1+ $\epsilon$ ($\epsilon \in (0,1)$) from the query $q$. The nearest neighbor $p^{\ast}$ is called a $(1+\epsilon)$-approximate nearest neighbor to $q$ which can be expressed as:
\begin{equation} \label{eq:pstarAdv}
\begin{split}
&	\exists ~p^*\in P,~
\twonorm{q-p^*}\leq 1 \text { and }\\
&\forall~ p\in P\setminus\{p^*\},\ 
\twonorm{q-p} \geq 1+\eps
\end{split}
\end{equation}
However, in real-world natural language processing applications, there are usually noise in the data, such as spelling errors, non-standard words in newsgroups, pause filling words in speech. Hence, we assume the data set $P$ with arbitrary noise $t$ to create $\tilde{P} =\{ \tilde{p}_1,\cdots,\tilde{p}_n\}$, where $\tilde{p}_i = p_i+t_i$. The query $q$ is perturbed similarly to get $\tilde{q}=q+t_q$. We assume that the noise is bounded, that is $\twonorm{t_i} \leq \epsilon/25$ and $\twonorm{t_q}\leq \epsilon/25$.  

There are many approaches proposed to solve the approximate nearest neighbor search problem. Existing methods can be classified as two groups: the data-independent methods and the data-dependent methods. The data-independent approaches are mainly based on random projection to get data partitions, such as Local Sensitive Search, Minwise Hashing. Recently,  the data-dependent methods received interest for its outstanding performance. They utilize spectral decomposition to map the data to different subspace, such as Spectral hashing. However, theoretical guarantee about the performance is not provided. Existing methods can not handle natural language processing problems well for the following reasons. First the data set in natural language processing is usually in large scale which yields a huge search space. Moreover, the singular value decomposition which is widely used to obtain the low-rank subspace is too expensive here.  Second, the data is with noise. Existing data-aware projection is not robust to noisy data which cannot lead to correct partitions. 

To solve the above mentioned problem, we propose a novel iterated spectral based approximate nearest neighbor search framework for general question answering tasks (Random Subspace based Spectral Hashing (RSSH)). Our framework consists of the following major steps:
\begin{itemize}
	\item As the data is with noise, we first project the data to the clean low-rank subspace. We obtain a low-rank approximation within (1+$\delta$) of optimal for spectral norm error by the randomized block Krylov methods which enjoys the time complexity $O(nnz(X))$ \cite{musco2015randomized}. 
	\item To eliminate the search space, we partition data to different clusters. With the low-rank subspace approximation, data points with are clustered corresponding to their distance to the subspace. 
	\item Given the query, we first locate its nearest subspace and then search the nearest neighbor in the data partition set corresponding to the nearest subspace. 
\end{itemize}
With our framework, we provide theoretical guarantees in the following ways:
\begin{itemize}
	\item With the low-rank approximation, we prove that the noise in the projected subspace is small. 
	\item With the data partition strategy, all data will fall to certain partition within $O(\log n)$ iterations.
	\item We prove that our method can return the nearest neighbor of the query in low-rank subspace which is the nearest neighbor in the clean space. 
\end{itemize}

To the best of our knowledge, it is the first attempt of spectral nearest neighbor search for question answering problem with theory justification. Generally, our framework can solve word similarity task, text classification problems (sentiment analysis), word analogy task and named entity recognition problem.

The theoretical analysis in this work is mainly inspired by the work in \cite{abdullah2014spectral}. The difference is that the subspace of data sets is computed directly in \cite{abdullah2014spectral}, in our work, we approximate the subspace by a randomized variant of the Block Lanczos method \cite{musco2015randomized}. In this way, our method enjoys higher time efficiency and returns a (1+$\epsilon/5$)-approximate nearest neighbor.

\section{Notation}
In this work, we let $s_j(M)$ denote the $j$-th largest singular value
of a real matrix $M$.
$\fronorm{M}$ is used to denote the Frobenius norm of $M$.
All vector norms, i.e. $\twonorm{v}$ for $v\in\R^d$, refer to the $\ell_2$-norm.

The \emph{spectral norm} of a matrix $X\in\R^{n\times d}$ is defined
as 
\begin{equation}
\twonorm{X} = \sup_{y\in \R^d: \twonorm{y}=1} \twonorm{X y},
\end{equation}
where all vector norms $\twonorm{\cdot}$ refer throughout to the $\ell_2$-norm. It is clear that $\twonorm{M}=s_1(M)$ equals the spectral norm of $M$.
The Frobenius norm of $X$ is defined as $\fronorm{X}=(\sum_{ij}
X_{ij}^2)^{1/2}$, and let $X^\tran$ denote the transpose of $X$.
A \emph{singular vector} of $X$ is a unit vector $v \in \R^d$ 
associated with a \emph{singular value} $s\in\R$ and a unit vector 
$u \in \R^n$ such that $X v = s u$ and $u^\tran X = s v^\tran$. 
(We may also refer to $v$ and $u$ as a pair of right-singular and left-singular vectors associated with $s$.)

Let $p_{\tilde{U}}$ denote the projection of a point
$p$ onto $\tilde{U}$. Then the distance between a point $x$ and
a set (possibly a subspace) $S$ is defined as $d(x,S) = \inf_{y\in S}
\twonorm{x-y}$.

\section{Problem}
Given an $n$-point dataset $P$ and a query point $q$,
both lying in a $k$-dimensional space $U \subset \R ^d$,
we aim to find its nearest neighbor $q^*$ which satisfying that:
\begin{equation} \label{eq:pstarAdv}
\exists p^*\in P \text{ such that }
\twonorm{q-p^*}\leq 1 \text { and }
\forall p\in P\setminus\{p^*\},\ 
\twonorm{q-p} \geq 1+\eps
\end{equation}
Assume that the data points are corrupted by arbitrary small noise $t_i$ which is bounded $\twonorm{t_i} \leq \eps/16$ for all $i$ ($\eps\in(0,1)$). The observed set $\tilde P$ consists of points $\tilde p_i=p_i+t_i$ for all $p_i\in P$
and the noisy query points $\tilde q = q+t_q$ with $\twonorm{t_q} \le \eps/16$.

\section{Algorithm}

\subsection{Subspace Approximation}
We utilize a randomized variant of the Block Lanczos method proposed in \cite{musco2015randomized} to approximate the low-rank subspace of the data set. 
\begin{algorithm}[t]
	\caption{Block Lanczos method \cite{musco2015randomized}}
	\label{alg:musco}
	\begin{algorithmic}[1]
		\State \textbf{Input: } $A \in \R^{m \times d}, \text{rank} ~ k \le d, m, \text{error}~ \eta \in (0,1)$
		\State $r := \Theta \left(\frac{\log(m)}{\sqrt{\eta}} \right)$, $c=rk$, $\Pi \sim \cN(0,1)^{d \times k}$
		\State Compute $K := [A \Pi,(AA^\top) A \Pi,\ldots, (AA^\top)^{r} A \Pi]$
		\State Orthonormalize $K$'s columns to obtain $Q \in \R^{m \times rk}$ 
		\State Compute the truncated SVD $(Q^\top A)_k := W_k \Sigma_k V_k^\top\in R^{c\times c}$   
		\State Compute $Z_k := Q W_k \in R^{m\times k}$ 
		\State \textbf{Output:} $Z_k$ 
	\end{algorithmic}
\end{algorithm}
\begin{theorem} \label{thm:musco} \textbf{\cite{musco2015randomized}}
	For any $A\in R^{m\times d}$, the $k$-dimensional low-rank subspace obtained by singular value decomposition is denoted as $A_k$. Algorithm \ref{alg:musco} returns $Z_k$ which forms the low-rank approximation $\tA_k = Z_k Z_k^\top A$, then the following bounds hold with probability at least $9/10$:
	\begin{equation}
	\|A - \tA_k\|_2 \le (1+\eta) \|A-A_k\|_2  \le (1+\eta) \sigma_k
	\end{equation}
	\begin{align} 
	\forall i \in [k],~~|z_i^\top AA^\top z_i - u_i^\top AA^\top u_i| &=  |\tsig_i^2-\sigma_i^2 | \notag \\
	&\le \eta \sigma_{k+1}^2 ~. \label{eq:perVec}
	\end{align}
	where $\tsig_i$ is the $i$-th singular value of $A$.
	The runtime of the algorithm is $O\left(\frac{md k \log(m)}{ \sqrt{\eta}}+ \frac{k^2(m+d)}{\eta} \right)$.
\end{theorem}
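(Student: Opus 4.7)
The plan is to exploit the defining property of the block Krylov matrix: the column span of $K$ equals $\{p(AA^\top) A\Pi : p \text{ a real polynomial of degree at most } r\}$. Since $Q$ is an orthonormal basis for this span, the truncated SVD in Line 5 extracts the optimal rank-$k$ approximation to $A$ whose left singular directions lie in $\spn(Q)$. Both bounds will therefore reduce to exhibiting a single polynomial $p$ of degree $r = \Theta(\log(m)/\sqrt{\eta})$ that approximately implements $\sqrt{\cdot}$ on the top part of the spectrum of $AA^\top$ and is very small on the tail.

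For the spectral-norm bound, I would take $p$ to be a shifted and scaled Chebyshev polynomial so that $|p(x)|$ is at most $\eta/m$ for $x \le \sigma_{k+1}^2$ and is bounded away from zero for $x \ge (1+\eta)\sigma_{k+1}^2$. Then $p(AA^\top) A \Pi$ concentrates its action on the top-$k$ left singular subspace $U_k$ of $A$, while its contribution from directions with singular value at most $\sigma_{k+1}$ has spectral norm $O(\eta \sigma_{k+1})$. A standard subspace-embedding estimate for the $d \times k$ Gaussian $\Pi$ gives, with probability at least $9/10$, that $V_k^\top \Pi$ is well-conditioned; hence the top-$k$ left singular directions of $p(AA^\top)A\Pi$, which already lie in $\spn(Q)$, are within principal angle $O(\sqrt{\eta})$ of $U_k$. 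Feeding this into $\twonorm{A - Z_k Z_k^\top A}$ and using the minimality of the truncated SVD over $\spn(Q)$ yields the claimed bound.

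The per-vector estimate \eqref{eq:perVec} is the main obstacle and the technical heart of the Musco-Musco improvement, because each Ritz value $\tsig_i^2 = z_i^\top A A^\top z_i$ must be controlled individually, not merely in aggregate. Here I would invoke Rayleigh-Ritz monotonicity: $\tsig_i^2$ is the $i$-th eigenvalue of $Z_k^\top A A^\top Z_k$, and so $\sigma_i^2 - \tsig_i^2$ is bounded by the squared sine of the angle between $u_i$ and $\spn(Z_k)$, scaled by the relevant spectral gap. Using the same Chebyshev polynomial, every $\sigma_j > \sigma_{k+1}$ contributes a term whose $U_k$-component is amplified by at least $(1+c\sqrt{\eta})^r$ relative to its orthogonal complement, so after the truncated SVD step each $\angle(u_i, \spn(Z_k))$ is $O(\sqrt{\eta})$; crucially, the error in the Rayleigh quotient along $u_i$ picks up a factor of $\sigma_{k+1}^2$ (the tail energy) rather than $\sigma_i^2$, which is exactly what is needed. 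A union bound over $i \in [k]$ gives $|\tsig_i^2 - \sigma_i^2| \le \eta \sigma_{k+1}^2$.

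The runtime is bookkeeping: the $r$ block products $(AA^\top)^j A \Pi$ cost $O(mdk)$ each, totalling $O(mdk \log(m)/\sqrt{\eta})$; orthonormalizing $K \in \R^{m \times rk}$ and taking the truncated SVD of $Q^\top A \in \R^{rk \times d}$ together cost $O((rk)^2 (m+d)) = O(k^2(m+d)/\eta)$.
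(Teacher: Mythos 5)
This theorem is not proved in the paper at all: it is imported verbatim (with attribution) from \cite{musco2015randomized}, so there is no in-paper argument to compare your proposal against. Judged on its own terms, your sketch does follow the actual strategy of the cited work: identifying $\spn(K)$ with $\{p(AA^\top)A\Pi : \deg p \le r\}$, using a shifted Chebyshev polynomial to amplify the spectral gap above $\sigma_{k+1}$ with degree $O(\log m/\sqrt{\eta})$, conditioning $V_k^\top\Pi$ via Gaussian sketch properties, and appealing to the optimality of the truncated SVD of $Q^\top A$ over $\spn(Q)$. The runtime accounting is also right.

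The one genuine gap is in your treatment of the per-vector bound \eqref{eq:perVec}. You argue that ``each $\angle(u_i,\spn(Z_k))$ is $O(\sqrt{\eta})$'' and then transfer this to the Rayleigh quotients. That step fails when singular values are clustered: if $\sigma_i\approx\sigma_{i+1}$, the individual singular vectors $u_i$ are ill-conditioned and no Krylov method (indeed no method) can guarantee a small angle to a \emph{particular} $u_i$; only the invariant subspace associated with a cluster is stable. This is precisely why the guarantee is stated in terms of $|z_i^\top AA^\top z_i-\sigma_i^2|$ rather than vector convergence, and why the actual analysis in \cite{musco2015randomized} partitions the spectrum around $\sigma_{k+1}$, proves the Krylov subspace nearly contains the span of the well-separated part, and bounds the Ritz values through a min-max (Cauchy interlacing / Rayleigh--Ritz) argument that never requires convergence to individual $u_i$. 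The factor $\sigma_{k+1}^2$ on the right-hand side, which you correctly flag as the crucial feature, comes out of that subspace-level argument, not from per-vector angles. Also, the $9/10$ success probability is a single event on the Gaussian sketch (conditioning of $V_k^\top\Pi$), not a union bound over $i\in[k]$. As a minor aside, the statement as transcribed in this paper has $\|A-A_k\|_2\le(1+\eta)\sigma_k$ where it should read $\sigma_{k+1}$, and $\tsig_i$ should be described as the $i$-th \emph{approximate} singular value; neither error is yours.
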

Algorithm \ref{alg:musco} returns the matrix $Z_i$ which is  the approximation to the left singular vectors of data matrix $A$. We use $Z^TA$ to approximate the  right singular vectors of data matrix $A$. 

\subsection{Data Partition}

\begin{lemma}\label{cl:boundednearest} \cite{abdullah2014spectral}
	The nearest neighbor of $\tilde{q}$ in $\tilde P$ is $\tilde{p}^*$.
\end{lemma}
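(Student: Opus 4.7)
The plan is to carry out a pure triangle inequality argument using the two assumed bounds: $\twonorm{q-p^*}\le 1$ versus $\twonorm{q-p}\ge 1+\eps$ for all other $p\in P$, together with the noise bound $\twonorm{t_i},\twonorm{t_q}\le\eps/16$. Since each observed point is of the form $\tilde p_i = p_i + t_i$ and $\tilde q = q + t_q$, the quantity $\twonorm{\tilde q-\tilde p_i}$ can be sandwiched between $\twonorm{q-p_i}\pm(\twonorm{t_q}+\twonorm{t_i})$.

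First I would upper bound the distance from $\tilde q$ to the perturbed true neighbor. Applying the triangle inequality,
\begin{equation}
\twonorm{\tilde q-\tilde p^*} \;\le\; \twonorm{q-p^*} + \twonorm{t_q}+\twonorm{t^*} \;\le\; 1 + \frac{\eps}{16}+\frac{\eps}{16} \;=\; 1+\frac{\eps}{8}.
\end{equation}
Next I would lower bound the distance from $\tilde q$ to any other perturbed point $\tilde p$ with $p\in P\setminus\{p^*\}$. By the reverse triangle inequality,
\begin{equation}
\twonorm{\tilde q-\tilde p} \;\ge\; \twonorm{q-p} - \twonorm{t_q}-\twonorm{t} \;\ge\; (1+\eps) - \frac{\eps}{16}-\frac{\eps}{16} \;=\; 1+\frac{7\eps}{8}.
\end{equation}

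The conclusion then follows by comparing the two bounds: since $1+\eps/8 < 1+7\eps/8$ for any $\eps>0$, we have $\twonorm{\tilde q-\tilde p^*} < \twonorm{\tilde q-\tilde p}$ for every $p\neq p^*$, so $\tilde p^*$ is indeed the unique nearest neighbor of $\tilde q$ in $\tilde P$. There is no real obstacle here; the only thing to be careful about is bookkeeping of the noise budget. The stated $\eps/16$ bound leaves a factor-of-$6$ slack between the two distances, which is more than enough for the conclusion and is also the reason the introduction can afford to weaken the assumption to $\eps/25$ while still retaining correctness. The same margin will later be important when the lemma is composed with the spectral projection error introduced by Algorithm~\ref{alg:musco}, but at this stage no spectral argument is needed.
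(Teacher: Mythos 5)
Your proof is correct, and it is the standard argument: sandwich $\twonorm{\tilde q-\tilde p}$ between $\twonorm{q-p}\pm(\twonorm{t_q}+\twonorm{t})$, getting $\twonorm{\tilde q-\tilde p^*}\le 1+\eps/8$ against $\twonorm{\tilde q-\tilde p}\ge 1+7\eps/8$ for all $p\neq p^*$. The paper itself supplies no proof of this lemma at all --- it is stated bare with only a citation to the source --- so your write-up actually fills a gap rather than diverging from anything. One small quibble with your closing remark: replacing the bound $\eps/16$ by $\eps/25$ is a \emph{strengthening} of the hypothesis (less noise is permitted), not a weakening of it, so the lemma holds under the introduction's $\eps/25$ assumption a fortiori; the slack you identify is what the later lemmas consume, but the direction of the implication is the opposite of what you wrote.
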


\begin{algorithm} [t]
	\caption{Spectral Data Partition by Low-rank Subspace Approximation}
	\label{alg:iterPCA}
	\begin{algorithmic}[1]
		\State \textbf{Input:} $\tP\in R^{n\times d}$, \text{rank}~$k \leq n,d$, \text{error} $\epsilon,\eta \in (0,1)$, \text{threshold} $\alpha=\epsilon/25$
		\State $i:=0$, $\tP_0 := \tilde{P}$ 
		\While{$\tilde P_i \neq \emptyset$}
		\State Compute the $k$-dimensional subspace approximation $S_i$ and low-rank projection matrix $\tZ_i$ of $\tilde{P}_i$ by Algorithm \ref{alg:musco}
		\State Compute the distance between data points and subspace 
		\begin{equation*}
		d(\tilde{p_j}, S_i):=\inf_{y\in S_i}\twonorm{\tp_j-y} ~~(\tp_j \in \tP_i)
		\end{equation*}
		\State Partition data points 
		\begin{equation*}
		M_i:= \aset{ \tilde{p_j} \in \tilde{P}_i:\ d(\tilde{p_i}, S_j ) \leq \sqrt{2} (1+\eta) \alpha }
		\end{equation*}
		\State Update dataset 
		\begin{equation*}
		\tilde P_{i+1} := \{ \tP_i \setminus M_i\}
		\end{equation*}
		\State Update iteration $i=i+1$
		\EndWhile
		\State \textbf{Output:} $\mathcal{{S}} = \{\tilde{S}_0, \ldots,\tilde{S}_{i-1}  \}$, $\mathcal{{Z}} = \{\tilde{Z}_0, \ldots,\tilde{Z}_{i-1}  \}$, $\mathcal{M} =\{M_0, M_1, \ldots, M_{i-1}\}$.
	\end{algorithmic}
\end{algorithm}

\begin{lemma}
	Algorithm~\ref{alg:iterPCA} terminates within $O(\log n)$ iterations.
\end{lemma}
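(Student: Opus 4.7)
The plan is to show that every iteration removes at least half of the surviving noisy points, so that $|\tilde P_i|\le n/2^i$ and the algorithm halts after at most $\lceil\log_2 n\rceil$ iterations. Concretely, at iteration $i$ I will argue that at least $|\tilde P_i|/2$ of the points lie within distance $\sqrt{2}(1+\eta)\alpha$ of the computed subspace $S_i$; those points then enter $M_i$, and $\tilde P_{i+1}$ is at most half the size of $\tilde P_i$.

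The key observation is that the clean points $p_j$ corresponding to the rows of $\tilde P_i$ all live in the fixed $k$-dimensional subspace $U$, while each $\twonorm{\tilde p_j-p_j}\le\alpha$. Hence $U$ itself is a $k$-dimensional subspace along which the squared Frobenius residual of $\tilde P_i$ is at most $N_i\alpha^2$, where $N_i=|\tilde P_i|$. This immediately yields $\fronorm{\tilde P_i-\tilde P_{i,k}}^2\le N_i\alpha^2$ for the optimal rank-$k$ SVD truncation and $\sigma_{k+1}(\tilde P_i)^2\le N_i\alpha^2$ for the tail singular value.

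Next I would convert the guarantees of Theorem~\ref{thm:musco} on $S_i=\spn(\tilde Z_i)$ into a Frobenius-type bound on the residual. Combining the per-vector inequality $|\tilde\sigma_j^2-\sigma_j^2|\le\eta\sigma_{k+1}^2$ for $j\in[k]$ with the identity $\fronorm{\tilde P_i - \tilde Z_i\tilde Z_i^\top\tilde P_i}^2 = \fronorm{\tilde P_i}^2 - \sum_{j=1}^k\tilde\sigma_j^2$ gives
\begin{equation*}
\fronorm{\tilde P_i - \tilde Z_i\tilde Z_i^\top\tilde P_i}^2 \ \le\ \fronorm{\tilde P_i-\tilde P_{i,k}}^2 + k\eta\,\sigma_{k+1}^2 \ \le\ (1+k\eta)\,N_i\alpha^2.
\end{equation*}
Since this Frobenius quantity equals $\sum_j d(\tilde p_j, S_i)^2$, Markov's inequality forces at least $N_i/2$ of the indices to satisfy $d(\tilde p_j, S_i)^2 \le 2(1+k\eta)\alpha^2$, and this is at most the threshold $2(1+\eta)^2\alpha^2$ provided Algorithm~\ref{alg:musco} is invoked with an internal accuracy slightly tighter than the outer $\eta$ (e.g.\ $\eta/k$), so that $1+k\eta\le(1+\eta)^2$. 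All such points enter $M_i$, establishing the halving.

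The main obstacle I anticipate is promoting the spectral-norm guarantee of Theorem~\ref{thm:musco} to the Frobenius-style, sum-of-squared-residuals bound that a counting argument like Markov's inequality inherently requires: the spectral bound alone controls only the top singular value of the residual and says nothing about individual row distances. The per-vector inequality (\ref{eq:perVec}) is the bridge, and one must carefully track the additive $k\eta\sigma_{k+1}^2$ slack it introduces so that it is absorbed by the $(1+\eta)^2-1$ margin built into the partition threshold $\sqrt{2}(1+\eta)\alpha$, which is precisely where the internal choice of the Lanczos accuracy parameter matters.
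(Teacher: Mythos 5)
Your proposal follows the same skeleton as the paper's proof: bound the total squared distance from the surviving points to the computed subspace $S_i$ by roughly $(1+\eta)^2\alpha^2 N_i$, apply an averaging (Markov) argument to conclude that at least half the points fall within the threshold $\sqrt{2}(1+\eta)\alpha$ and are absorbed into $M_i$, and iterate to get termination in $O(\log n)$ rounds. The one place where you genuinely diverge is the step you flag as the ``main obstacle,'' and there your version is the more careful one. The paper writes $\sum_{\tilde p} d(\tilde p, S)^2 = \inf \twotwonorm{\tilde P - Z_k Z_k^\top \tilde P}$ and then applies the spectral-norm guarantee of Theorem~\ref{thm:musco} directly; this conflates the spectral norm with the Frobenius norm (the sum of squared row distances is $\fronorm{\tilde P - Z_kZ_k^\top\tilde P}^2$, not $\twotwonorm{\tilde P - Z_kZ_k^\top\tilde P}$), so the paper's chain of inequalities does not literally follow from the cited theorem. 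You instead route through the per-vector guarantee~(\ref{eq:perVec}), using $\fronorm{\tilde P_i - \tilde Z_i\tilde Z_i^\top \tilde P_i}^2 = \fronorm{\tilde P_i}^2 - \sum_{j\le k}\tilde\sigma_j^2$ to obtain an honest Frobenius bound with an additive $k\eta\,\sigma_{k+1}^2$ slack, and you correctly observe that absorbing this slack into the $(1+\eta)^2$ margin of the partition threshold requires running the Lanczos routine at accuracy $\eta/k$ rather than $\eta$. Both arguments use the same observation that the clean points lie in a $k$-dimensional subspace so the optimal rank-$k$ residual is at most $N_i\alpha^2$; the net effect of your variant is a correct proof at the cost of a mildly worse (by a factor depending on $k$) runtime in Theorem~\ref{thm:musco}, whereas the paper's stated proof has a genuine norm-confusion gap at that step.
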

\begin{proof} 	Let $U$ be the $k$-dimensional subspace of $P$ with projection matrix $V$, let $\tilde{U}$ be the $k$-dimensional
	subspace of $\tilde{P}$ with projection matrix $\tV$, let $S$ be the low-rank approximation returned by Algorithm \ref{alg:musco} with projection matrix $Z_k$.
	The distance between data points and subspace is computed as:
	\begin{equation*}
	\begin{split}
	\sum_{\tilde p\in \tilde{P}} d(\tp,\tU)^2 &= \sum_{\tilde p\in \tilde{P}} \inf_{y\in \tU}||\tp-y||_2^2 \\
	&= \inf||\tP-\tU_k\tU_k^T\tP||_2^2 .
	\end{split}
	\end{equation*}
	
	\begin{equation*}
	\begin{split}
	\sum_{\tilde p\in \tilde{P}} d(\tp,S)^2 &= \sum \inf_{y\in S}||\tp-y||_2^2 \\
	&= \inf||\tP-Z_kZ_k^T\tP||_2^2. \\
	\end{split}
	\end{equation*}
	
	According to Theorem \ref{thm:musco}, we can have:
	\[
	||\tP-Z_kZ_k^\top \tP||_2 \leq (1+\eta)||\tP-U_kU_k^\top \tP||_2
	\],
	We can get 
	\begin{equation}
	\sum_{\tilde p\in \tilde{P}} d(\tp,S) \leq (1+\eta)^2 \sum_{\tilde p\in \tilde{P}} d(\tp,\tU).
	\end{equation}
	
	Since $\tilde U$ minimizes 
	the sum of squared distances from all $\tilde p\in \tilde{P}$ to $\tilde U$,
	\[
	\sum_{\tilde p\in \tilde{P}} d(\tilde p,\tilde U)^2
	\leq \sum_{\tilde p\in \tilde{P}} d(\tilde p,U)^2
	\leq \sum_{\tilde p\in \tilde{P}} \twonorm{\tilde p-p}^2
	\leq \alpha^2 n.
	\]
	Then, we can get:
	\begin{equation}
	\sum_{\tilde p\in \tilde{P}} d(\tilde p,S)^2 \leq (1+\eta)^2\alpha^2n.
	\end{equation}
	Hence, there are at most half of the points in $\tilde{P}$ with distance to
	$S$  greater than $\sqrt{2}(1+\eta)\alpha$. The set
	$M$ captures at least a half fraction of points. The
	algorithm then proceeds on the remaining set. 
	After $O(\log n)$ iterations all points of $\tilde P$ must be
	captured.
\end{proof}

\begin{lemma} \label{cl:boundedReport}
	The approximated subspace $S$ that captures $\tilde{p}^*$ returns this as the $(1+\eps/5)$-approximate nearest neighbor of $\tilde q$ (in $\tilde U$).
\end{lemma}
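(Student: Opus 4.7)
The plan is to analyze projected distances $\|P_S\tilde q - P_S\tilde p\|$, where $P_S$ denotes orthogonal projection onto $S$, and show that after projection $\tilde p^*$ beats every other captured point by at most a $(1+\eps/5)$ factor. The two ingredients are an upper bound on $\|P_S\tilde q - P_S\tilde p^*\|$ and a matching lower bound on $\|P_S\tilde q - P_S\tilde p\|$ for all other $\tilde p \in M \setminus\{\tilde p^*\}$; dividing these and checking the numerical constants against $\alpha = \eps/25$ will give the claim.

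For the upper bound I would use that orthogonal projection is a contraction together with the triangle inequality:
\[
\|P_S\tilde q - P_S\tilde p^*\| \le \|\tilde q - \tilde p^*\| \le \|q-p^*\| + \|t_q\| + \|t_{p^*}\| \le 1 + 2\alpha.
\]
For the lower bound on a competing $\tilde p \in M\setminus\{\tilde p^*\}$, I would split
\[
\|P_S\tilde q - P_S\tilde p\| \ge \|\tilde q - \tilde p\| - \|(I-P_S)\tilde q\| - \|(I-P_S)\tilde p\| = \|\tilde q - \tilde p\| - d(\tilde q, S) - d(\tilde p, S),
\]
and then plug in $\|\tilde q - \tilde p\| \ge 1+\eps - 2\alpha$ from the $(1+\eps)$-separation assumption combined with triangle inequality, together with $d(\tilde p, S) \le \sqrt{2}(1+\eta)\alpha$, which is the very criterion under which $\tilde p$ was placed into $M$ by Algorithm~\ref{alg:iterPCA}.

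The main obstacle is bounding $d(\tilde q, S)$, since $\tilde q$ is not one of the points from which $S$ was constructed. My plan is to route through the clean subspace $U$ using $q\in U$: choose $s = P_S P_{\tilde U} q \in S$ and apply the triangle inequality twice to obtain
\[
d(\tilde q, S) \le \|t_q\| + d(q, \tilde U) + \|\sin\Theta(\tilde U, S)\|\cdot\|q\|.
\]
I would then bound (i) $d(q,\tilde U)$ by a Davis--Kahan-type argument applied to the perturbation $\tilde P = P+T$, whose rows have norm at most $\alpha$ (using that the clean data has rank $k$, so $\sigma_{k+1}(\tilde P)$ is $O(\alpha\sqrt n)$), and (ii) $\|\sin\Theta(\tilde U, S)\|$ by the spectral-norm guarantee of Theorem~\ref{thm:musco}. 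Taking $\eta$ small enough relative to $\alpha$, both terms are $O(\alpha)$, so $d(\tilde q, S) = O(\alpha)$.

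With all these pieces the proof closes by routine arithmetic: one verifies
\[
\frac{\|P_S\tilde q - P_S\tilde p^*\|}{\|P_S\tilde q - P_S\tilde p\|} \le \frac{1+2\alpha}{1+\eps-2\alpha-d(\tilde q, S)-\sqrt{2}(1+\eta)\alpha} \le 1+\frac{\eps}{5}
\]
by plugging in $\alpha = \eps/25$ and expanding. I expect the Davis--Kahan-style control of $d(q,\tilde U)$ to be the only delicate piece (it implicitly requires that $\sigma_k(P)$ not be too small relative to $\alpha$, a natural structural assumption since the clean data genuinely lies in a $k$-dimensional subspace); everything else is triangle-inequality bookkeeping.
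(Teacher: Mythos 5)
Your overall architecture (upper bound for $\tilde p^*$, lower bound for every competitor, divide, check constants) matches the paper's, but the step you yourself flag as delicate --- bounding $d(\tilde q,S)$ --- is a genuine gap, and in fact the approach would fail there. The problem setup gives no lower bound on $\sigma_k(P)$: the clean data may be ill-conditioned or even lie in a proper subspace of $U$, in which case Davis--Kahan/Wedin gives no control on $\angle(\tilde U,S)$ versus $U$ in the directions relevant to $q$. The query is only guaranteed to lie in $U$ and within distance $1$ of $p^*$, so the honest a priori bound is $d(\tilde q,S)\le \twonorm{\tilde q-\tilde p^*}+d(\tilde p^*,S)=O(1)$, not $O(\alpha)$. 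With $d(\tilde q,S)$ of order $1$ your denominator $1+\eps-2\alpha-d(\tilde q,S)-\sqrt2(1+\eta)\alpha$ can collapse to zero and the ratio argument gives nothing. Adding a spectral-gap hypothesis would rescue your route, but it is an assumption the lemma does not grant you.

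The paper's proof shows that no bound on $d(\tilde q,S)$ is needed at all. It first compares the \emph{unprojected} query against the \emph{projected} candidates: for every captured $p$, the triangle inequality through $\tilde p$ gives $\twonorm{p-\tilde p_S}\le \alpha+\sqrt2(1+\eta)\alpha\le 4\alpha$, hence $\twonorm{\tilde q-\tilde p_S}=\twonorm{q-p}\pm 5\alpha=\twonorm{q-p}\pm\eps/5$, and the separation assumption yields
\begin{equation*}
\frac{\twonorm{\tilde q-\tilde p_S}}{\twonorm{\tilde q-\tilde p^*_S}}>1+\tfrac15\eps .
\end{equation*}
Then, since $\tilde p_S,\tilde p^*_S\in S$, Pythagoras gives $\twotwonorm{\tilde q_S-\tilde p_S}=\twotwonorm{\tilde q-\tilde p_S}-\twotwonorm{\tilde q-\tilde q_S}$, so passing to distances inside $S$ subtracts the \emph{same} quantity $\twotwonorm{\tilde q-\tilde q_S}$ from numerator and denominator; for a ratio exceeding $1$ this only increases it, giving $(1+\eps/5)^2$ for the squared ratio. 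The term you were trying to control cancels out of the comparison. This is the missing idea; once you have it, your contraction bound and Davis--Kahan machinery become unnecessary, and the rest of your triangle-inequality bookkeeping is essentially what the paper does.
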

\begin{proof} [Proof of Lemma \ref{cl:boundedReport}]
	Fix $p\neq p^*$ that is captured by the same $S$,
	and use the triangle inequality to write 
	\begin{equation}
	\begin{split}
	\twonorm{p - \tilde{p}_{S}}
	&	\leq \twonorm{p-\tilde{p}} + \twonorm{ \tilde{p} - \tilde{p}_{S} } \\
	&	\leq \alpha + \sqrt{2}(1+\eta) \alpha
	\leq 4\alpha.
	\end{split}
	\end{equation}
	
	Similarly for $p^*$,
	$\twonorm{p^* - \tilde p^*_{S}} \leq 4\alpha$,
	and by our assumption
	$\twonorm{q - \tilde q} \leq \alpha$.
	By the triangle inequality, we get 
	\begin{equation}
	\begin{split}
	\twonorm{ \tilde q - \tilde p^*_{S} } &\leq \twonorm{\tq-p} +\twonorm{p-\tp^*_S}\\
	&\leq \twonorm{\tq-q}+\twonorm{q-p}+\twonorm{p-\tp^*_S}\\
	&\leq \twonorm{q-p}+5\alpha,
	\end{split}
	\end{equation}
	\begin{equation}
	\begin{split}
	\twonorm{ \tilde q - \tilde p^*_{S} } &\geq \twonorm{\tq-p}-\twonorm{\tp^*_S-p}\\
	&\geq \twonorm{q-p} -\twonorm{q-\tq}-\twonorm{\tp^*_S-p}\\
	&\geq \twonorm{q-p}-5\alpha,
	\end{split}
	\end{equation}
	Similarly, we bound  $\twonorm{ \tilde q - \tilde p^*_{S} } $
	\begin{align*} \label{eq:bounded1}
	\frac{ \twonorm{ \tilde q - \tilde p_{S} } }
	{ \twonorm{ \tilde q - \tilde p^*_{S} } }
	&= \frac{ \twonorm{ q-p } \pm 5\alpha  }
	{ \twonorm{ q-p^* } \pm 5\alpha } 
	=    \frac{ \twonorm{ q-p } \pm \tfrac15 \eps  }
	{ \twonorm{ q-p^* } \pm \tfrac15 \eps } \\
	& \geq \frac{\twonorm{q-p} - \tfrac15 \eps}
	{\twonorm{q-p^*} + \tfrac15 \eps}
	\geq \frac{ \twonorm{ q-p^* } + \tfrac45 \eps }
	{ \twonorm{ q-p^* } + \tfrac15 \eps }
	> 1+\tfrac15 \eps.
	\end{align*}
	By using Pythagoras' Theorem 
	(recall both $\tilde p_{S},\tilde p^*_{S}\in S$),
	\begin{align*}
	\frac{ \twotwonorm{ \tilde{q}_{S} - \tilde{p}_{S} } }
	{ \twotwonorm{ \tilde{q}_{S} - \tilde p^*_{S} } }
	= \frac{ \twotwonorm{ \tilde q - \tilde p_{S} } - \twotwonorm{ \tilde q - \tilde q_{S} }  }
	{ \twotwonorm{ \tilde q - \tilde p^*_{S} } - \twotwonorm{ \tilde q - \tilde q_{S} }  } 
	> (1+\tfrac15 \eps)^2.
	\end{align*}
	Hence, $\tilde{p}^*$ is reported by 
	the $k$-dimensional subspace it is assigned to.
\end{proof}

\section{Experiment}

In this experiment, we compare our algorithm with existing hashing algorithms. 
\subsection{Baseline algorithms}

Our comparative algorithms include state-of-the-art learning to hashing algorithm such as
\begin{itemize}
	\item Anchor Graph Hashing (AGH) \cite{liu2011hashing}
	\item Circulant Binary Embedding (CBE) \cite{YuKGC14} 
	\item Multidimensional Spectral Hashing (MDSH) \cite{WeissFT12}
	\item  Iterative quantization (ITQ) \cite{GongL11}
	\item Spectral Hashing \cite{weiss2009spectral} 
	\item Sparse Projection (SP) \cite{XiaHK015}
\end{itemize}
We refer our algorithm as Random Subspace based Spectral Hashing (RSSH). 
\subsection{Datasets.}
\begin{table}[]
	\centering
	\caption{Summary of Datasets}
	\label{tab:data}
	\resizebox{\linewidth}{!}{  
		\begin{tabular}{|l|r|r|c|c|}
			\hline
			Dataset  & \#training & \#query & \#feature &\# class \\ \hline
			MNIST   & 69,000          & 1,000          & 784        &   10       \\ \hline
			CIFAR-10 & 59,000          & 1,000          & 512         &10         \\ \hline
			COIL-20  & 20,019          & 2,000          & 1,024      &20            \\ \hline
			VOC2007 & 5,011           & 4,096         & 3,720    &20\\  \hline          
	\end{tabular}}
	\vspace{-0.1in}
\end{table}
Our experiment datasets include MNIST \footnote{{http://yann.lecun.com/exdb/mnist/}}, CIFAR-10\footnote{{https://www.cs.toronto.edu/~kriz/cifar.html/}}, COIL-20 \footnote{http://www.cs.columbia.edu/CAVE/software/softlib/coil-20.php} and the 2007 PASCAL VOC challenge dataset. 

\noindent \textbf{MNIST.}\ It is a well-known handwritten digits dataset from ``0'' to ``9''. The dataset consists of 70,000 samples in feature space of dimension 784. We split the samples to a training and a query set which containing 69,000 and 1,000 samples respectively. 

\noindent \textbf{CIFAR-10.}\ There are 60,000 image in 10 classes, such as ``horse'' and ``truck''. We use the default 59,000 training set and 1,000 testing as query set. The image is with 512 GIST feature. 

\noindent \textbf{COIL-20.}\ It is from the Columbia University Image Library which contains 20 objects. Each image is represented by a feature space 1024 dimension. For each object, we choose 60\% images for training and the others are querying.
\begin{figure}
	\centering
	\includegraphics[width=0.45\textwidth]{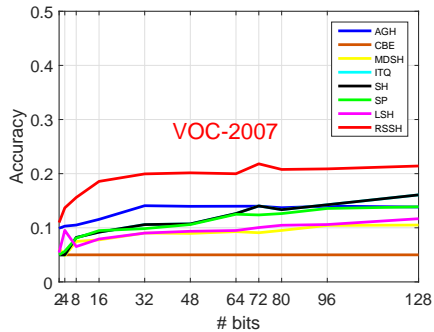}   
	\caption{The average precision on VOC2007.}
	\label{fig:avgprecision}
\end{figure}

\noindent \textbf{VOC2007.}\ The VOC2007 dataset consists of three subsets as training, validation and testing. We use the  first two subsets as training containing 5,011 samples and the other as query containing 4,096 samples. We set each image to the size of [80, 100] and extract the HOG feature with cell size 10 as their feature space \footnote{http://www.vlfeat.org/}. All the images in VOC2007 are defined into 20 subjects, such as ``aeroplane'' and ``dining tale''. For the classification task on each subject, there are 200 to 500 positive samples and the following 4,000 are negative. Thus, the label distribution of the query set is unbalanced. A brief description of the datasets are presented in Table \ref{tab:data}.

\subsection{Evaluation Metrics}
All the experiment datasets are fully annotated. We report the classification result based on the groundtruth label information. That is, the label of the query is assigned by its nearest neighbor. For the first three datasets in Table \ref{tab:data}, we report the classification accuracy. For the VOC2007 dataset, we report the precision as the label distribution is highly unbalanced. The criteria are defined in terms of true positive (TP), true negative (TN), false positive (FP) and false negative (FN) as,
\begin{equation}
\begin{split}
\textrm{Precision} &= \frac{\text{TP}}{\text{TP}+\text{FP}},\\
\text{Accuracy} &= \frac{\text{TP}+\text{TN}}{\text{TP}+\text{TN}+\text{FP}+\text{FN}}.
\end{split}
\end{equation}

For the retrieval task, we report the recall with top [1, 10, 25] retrieved samples. The true neighbors are defined by the Euclidean distance. 

We report the aforementioned evaluation criteria with varying hash bits ($r$) in the range of [2, 128].

\begin{figure*}
	\centering
	\subfloat[Recall of Top 1 retrieval]{
		\includegraphics[width=0.3\textwidth]{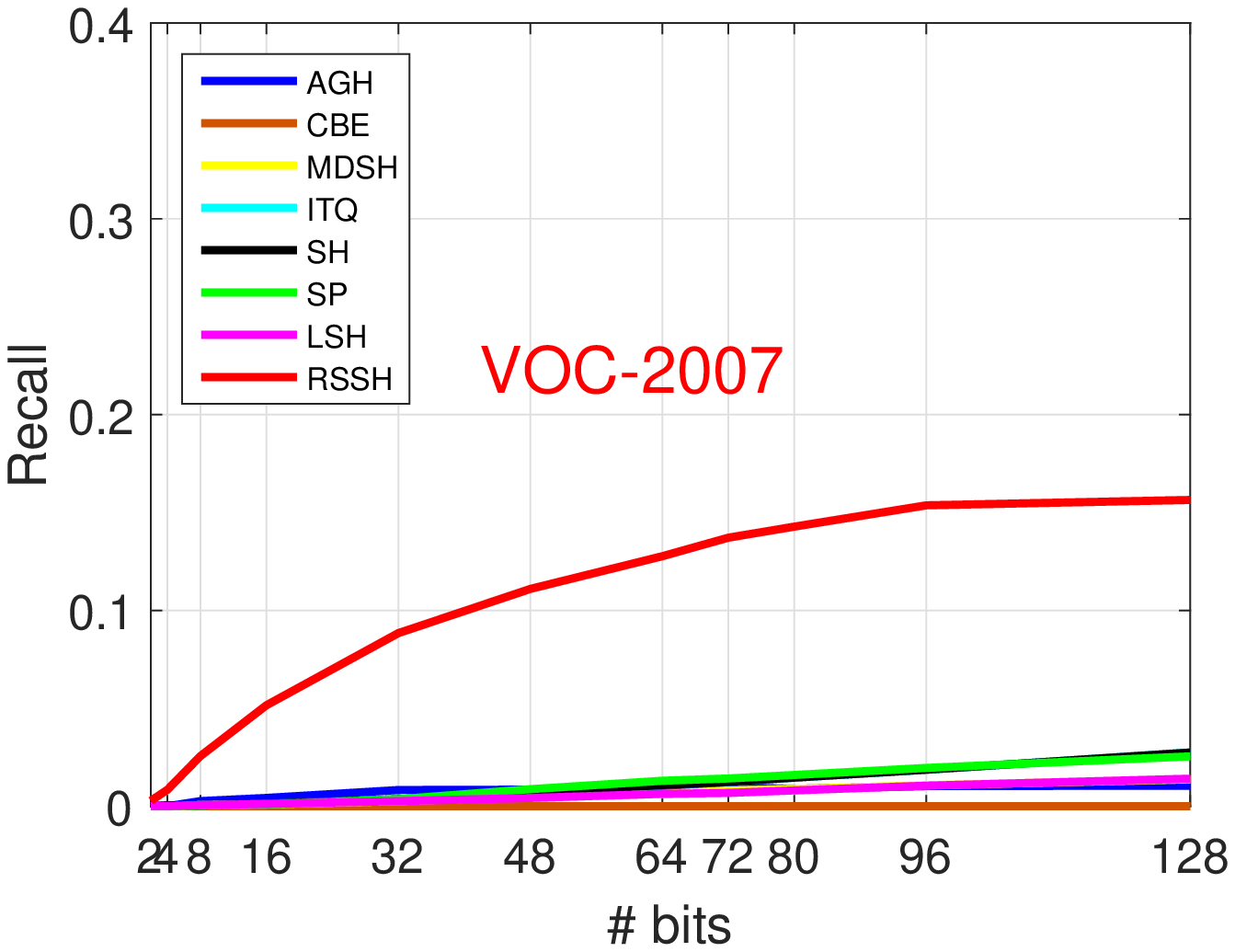}}
	\subfloat[Recall of Top 10 retrieval]{
		\includegraphics[width=0.3\textwidth]{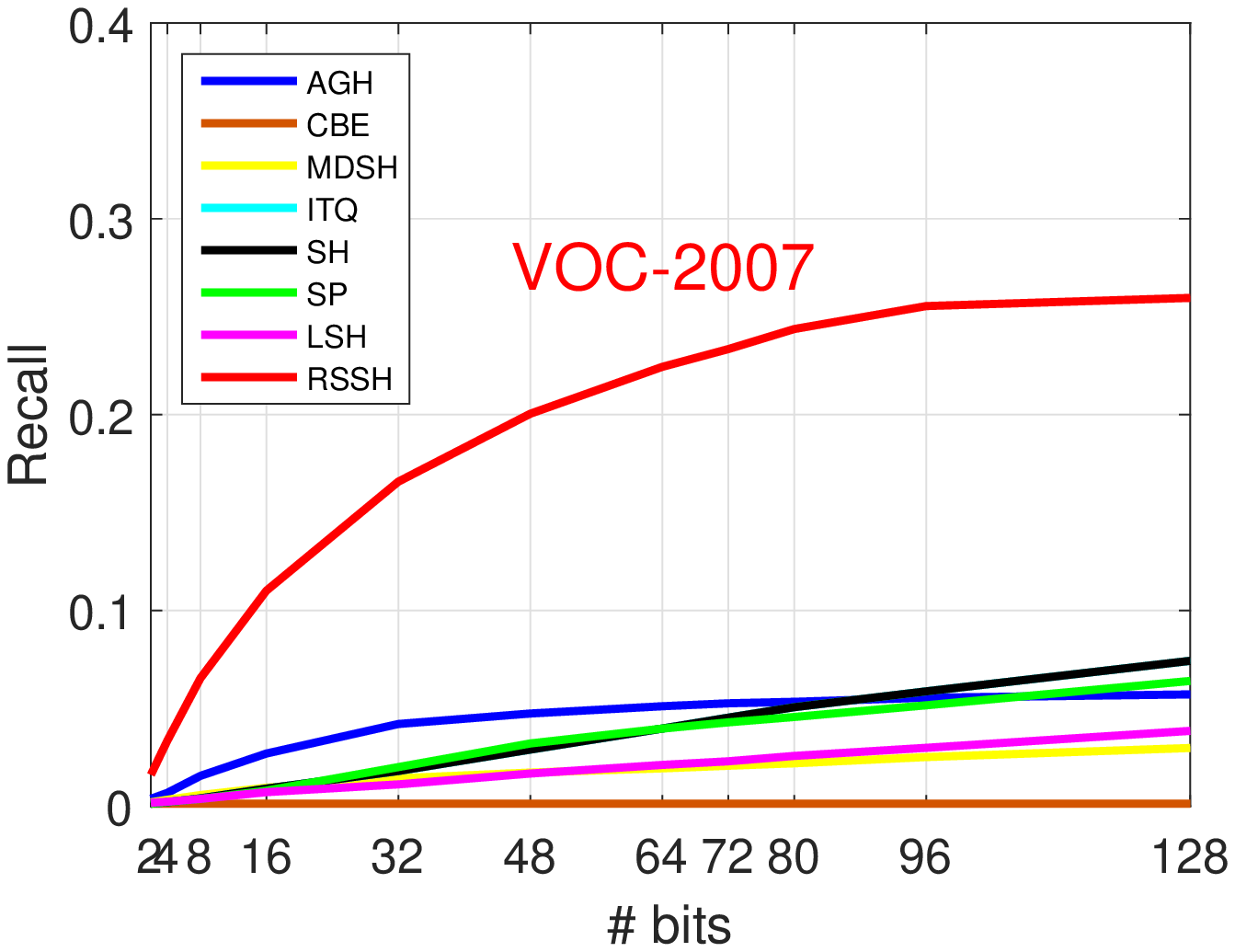}}
	\subfloat[Recall of Top 25 retrieval]{
		\includegraphics[width=0.3\textwidth]{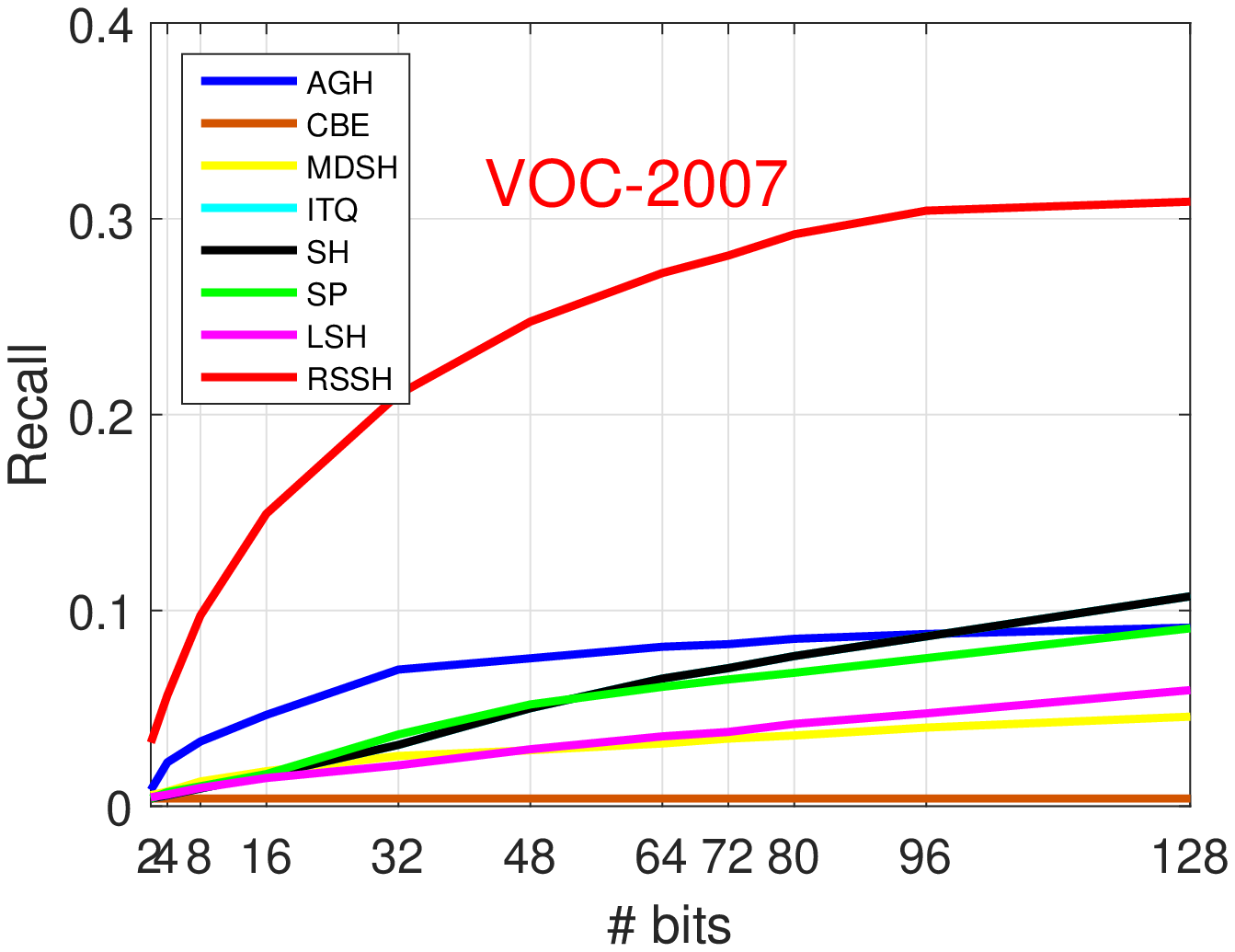}}
	\caption{The average recall in terms of the number of hash bits with different retrievals on the VOC2007.}
	\label{fig:avgrecall}
\end{figure*}
\subsection{Classification Results}
The classification accuracy on CIFAR-10, MNIST and COIL-20 are reported in Figure \ref{fig:accu}a, \ref{fig:accu}b and \ref{fig:accu}c. We can see that our algorithm achieves the best accuracy in terms of the number of hash bits in the three datasets. For example, on CIFAR-10 with $r=48$, the accuracy of our algorithm RSSN reaches 53.20\% while the comparative algorithms are all less than 40.00\%. The increase of hash bit promotes the accuracy of all algorithms, our algorithm remains the leading place. For example, on MNIST with $r=96$, ITH and SH reach the accuracy of 93.00\%, but our algorithm still enjoys 4.00\% advantage with 97.30\%.  Moreover, our algorithm obtains significant good performance even with limited information, that is, the $r$ is small. For instance, in terms of $r=8$, RSSN reaches the accuracy of 87.70\%, much better than the comparative algorithms. 

For the classification results on VOC2007, we report the accuracy on 12 of 20 classes as representation in Figure \ref{fig:precision}. We can see that it is a tough task for all the methods, but our algorithm still obtains satisfying performance. For example, on the classification task of ``horse'', our algorithm obtains around 10\% advantage over the all the comparative algorithms. The average precision on all the 20 classes are presented on Figure \ref{fig:avgprecision}. We can see that our algorithm obtains the overall best result.

\subsection{Retrieval Results}
The retrieval results on MNIST, CIFAR-10 and COIL-20 are presented in Figure \ref{fig:accu}d to Figure \ref{fig:accu}l. Our algorithm obtains the best recall with varying number of retrieved samples. For example, on CIFAR-10 with Top 10 retrieval and $r=96$, our algorithm reaches the recall over 70\%, while the others are less than 20\%. On MNIST with Top 25 retrieved samples and $r=128$, the recall of RSSN reaches 90\%, while the comparatives algorithms are around 40\%. 

\begin{figure*}
	\centering
	\subfloat[Classification accuracy on CIFAR-10]{
		\includegraphics[width=0.3\textwidth]{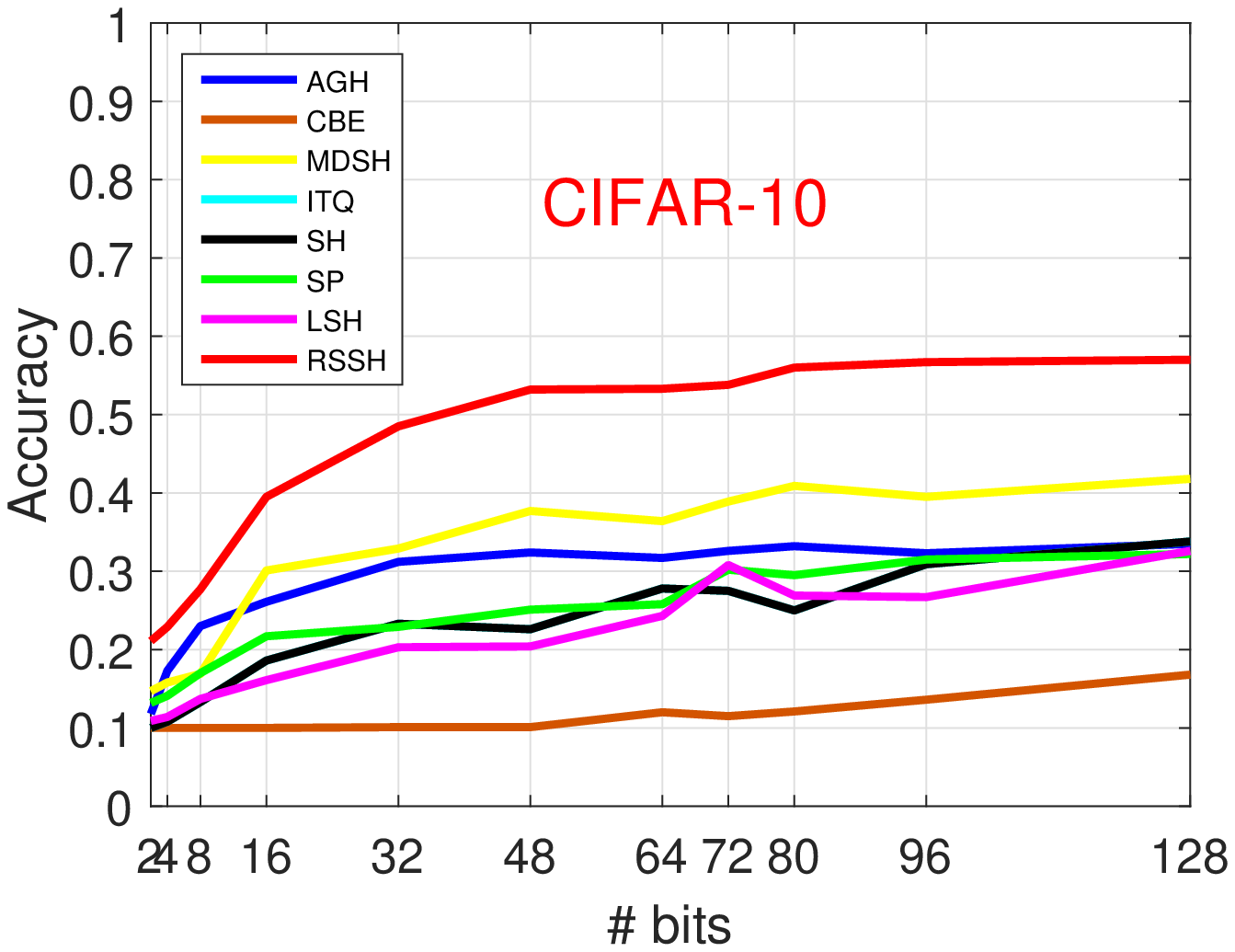}  }
	\subfloat[Classification accuracy on MNIST]{
		\includegraphics[width=0.3\textwidth]{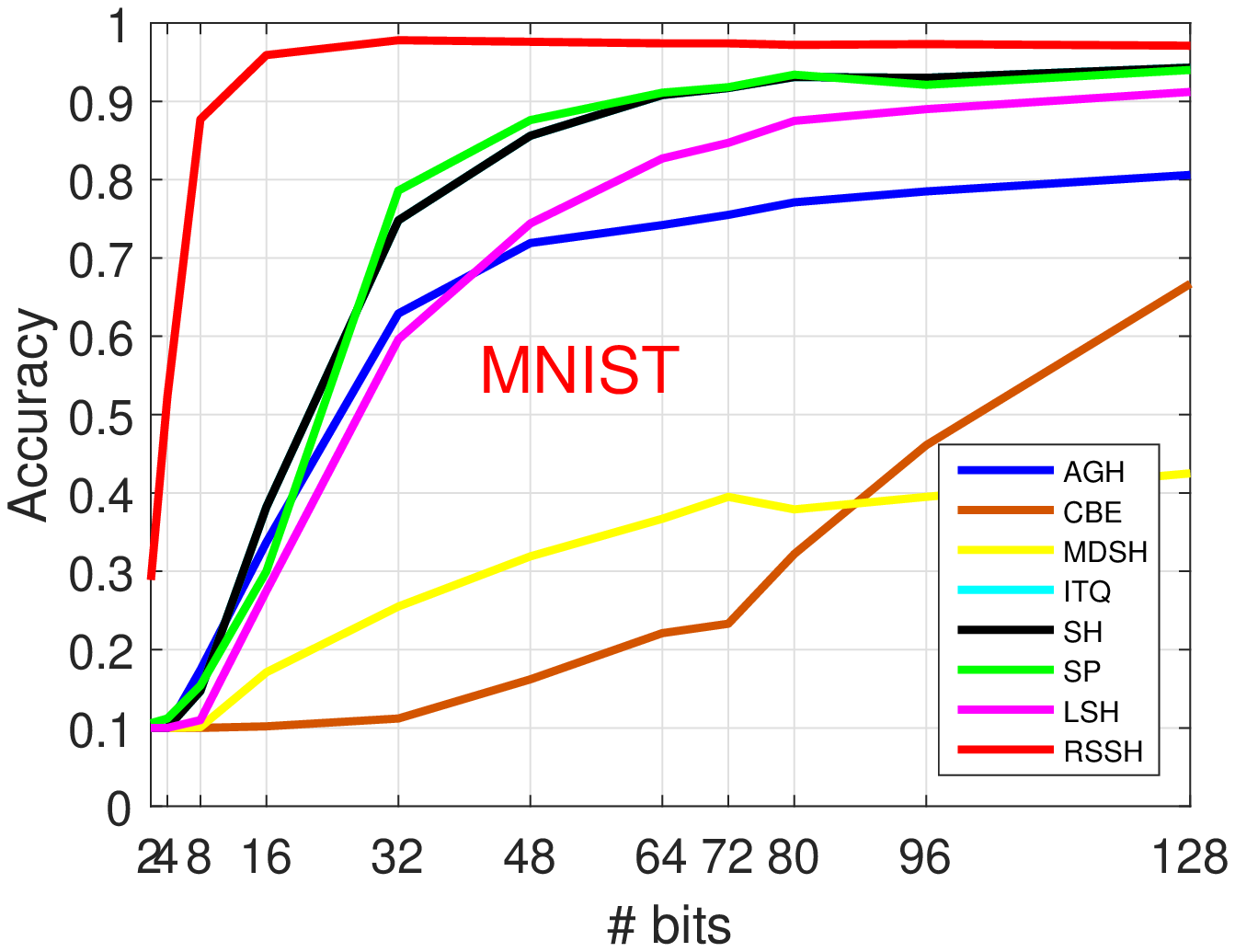}}
	\subfloat[Classification accuracy on COIL-20]{
		\includegraphics[width=0.3\textwidth]{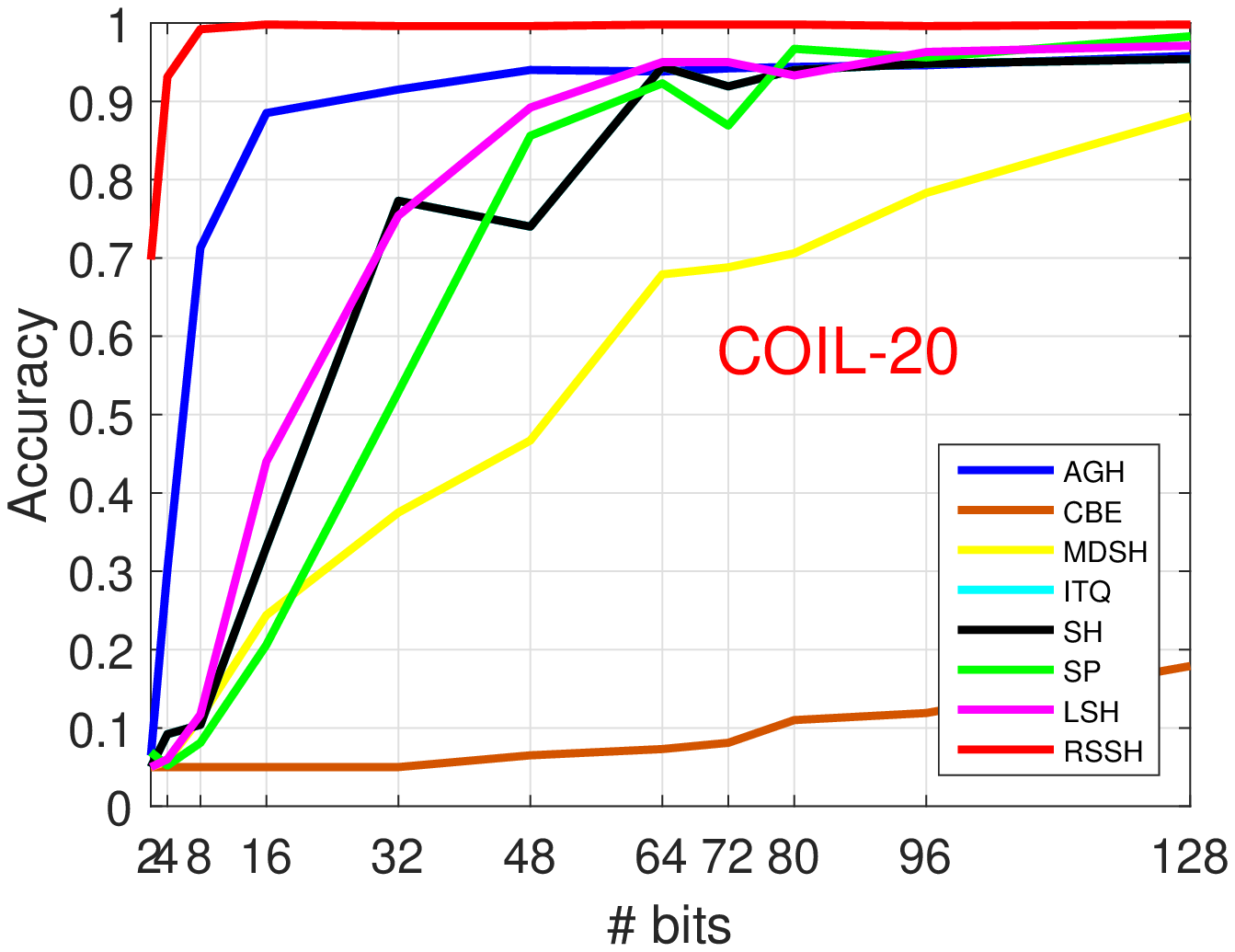}}
	
	\subfloat[Recall of Top 1 retrieval]{
		\includegraphics[width=0.3\textwidth]{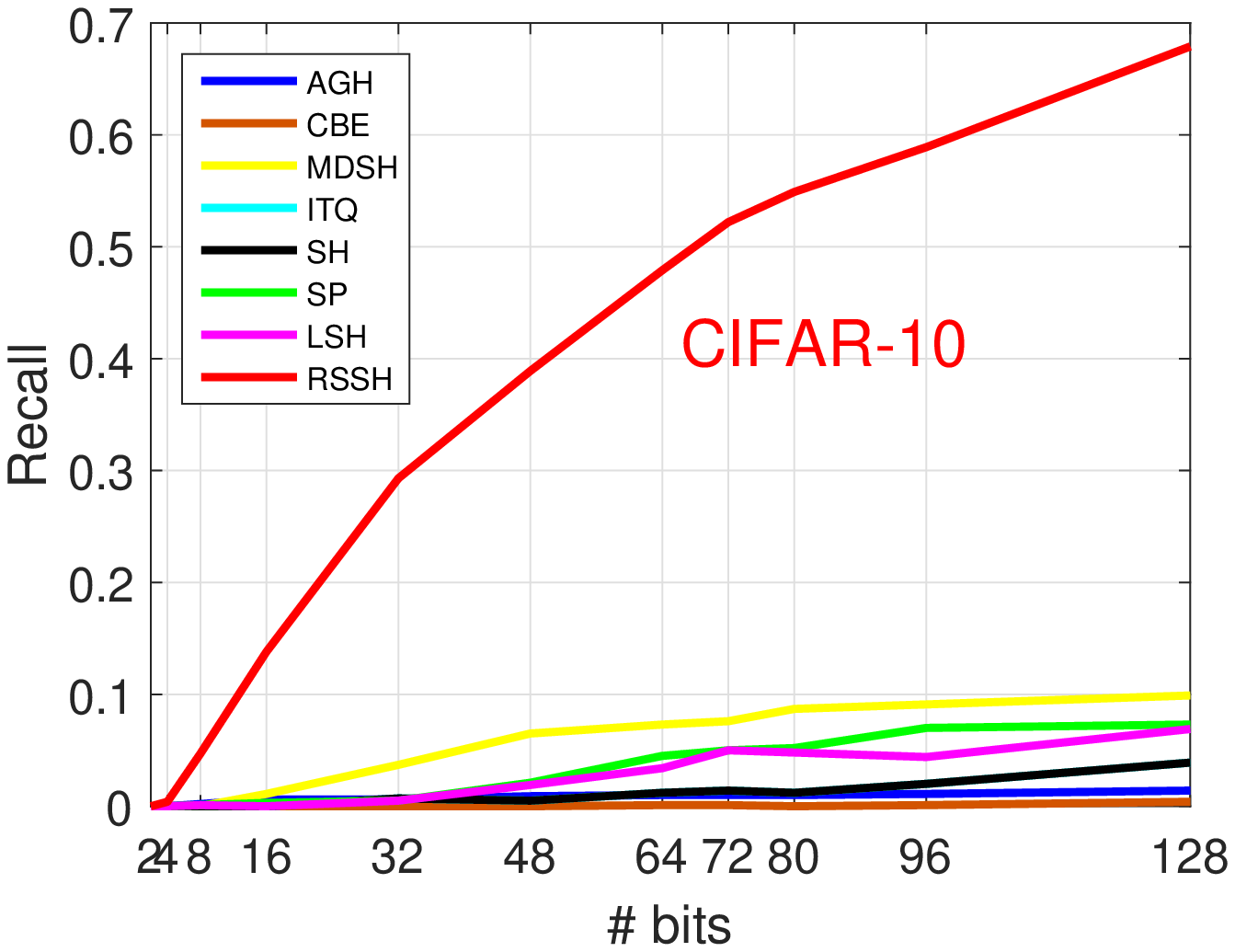}  }  
	\subfloat[Recall of Top 10 retrieval]{
		\includegraphics[width=0.3\textwidth]{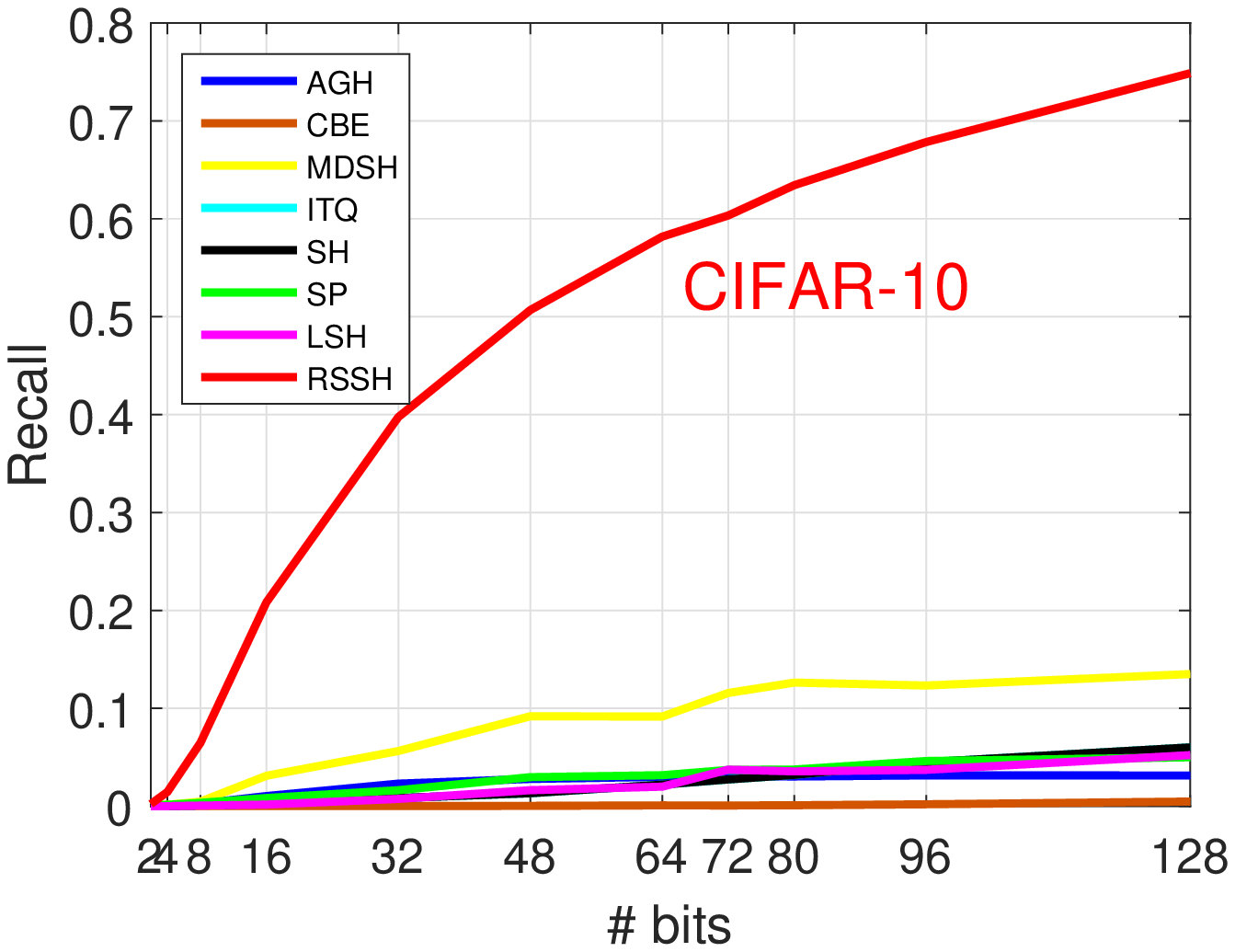}  }
	\subfloat[Recall of Top 25 retrieval]{
		\includegraphics[width=0.3\textwidth]{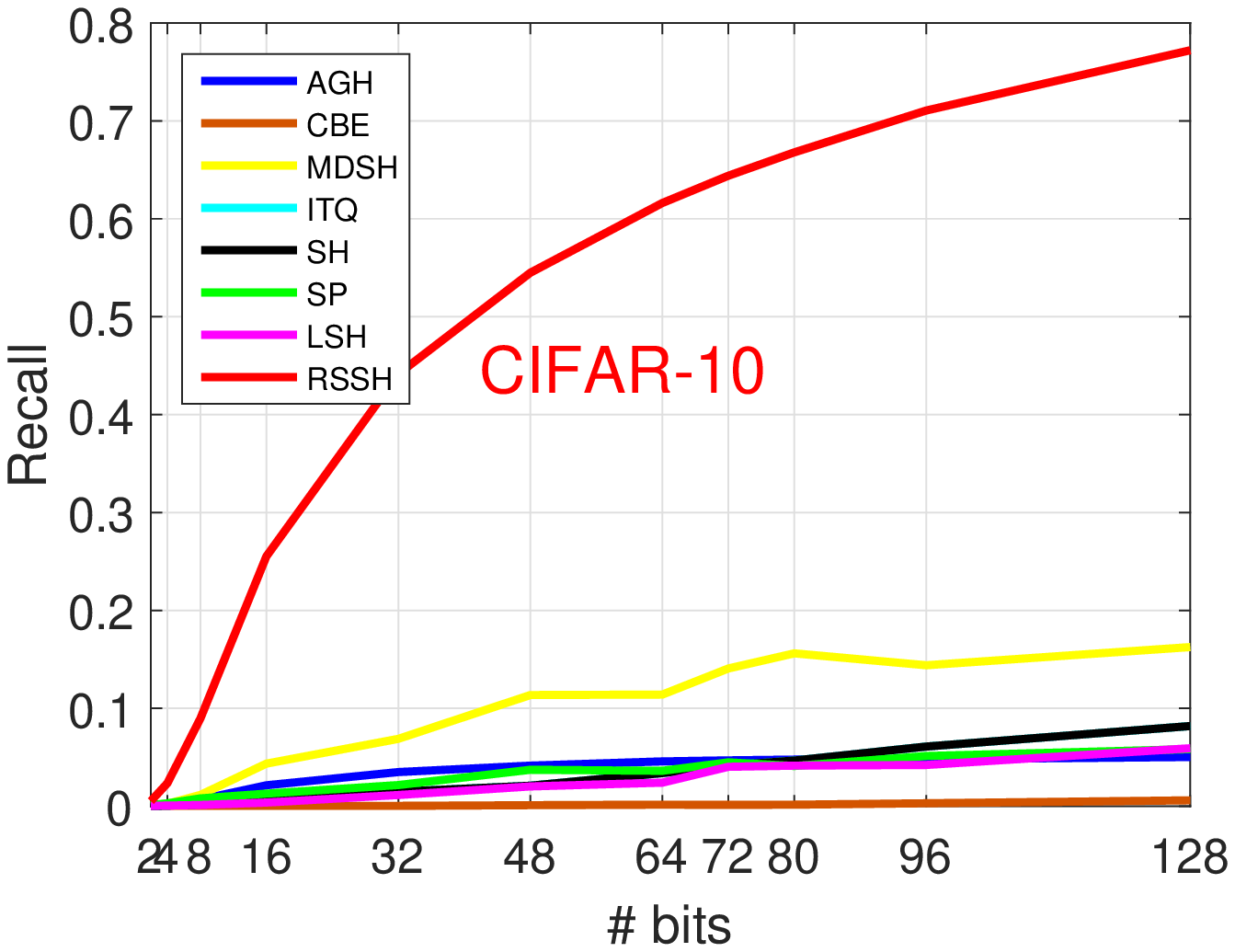}}  
	
	\subfloat[Recall of Top 1 retrieval]{           
		\includegraphics[width=0.3\textwidth]{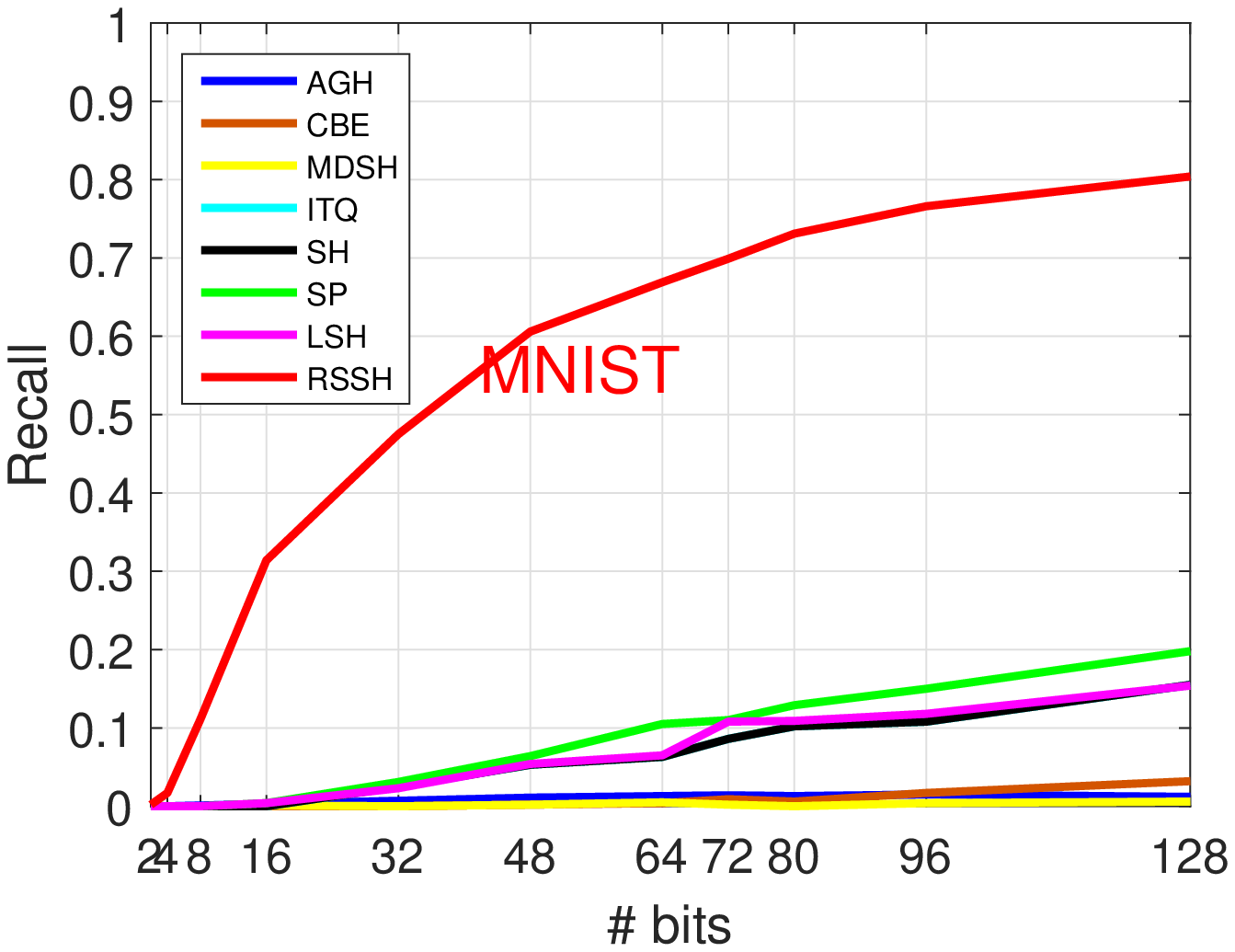}}
	\subfloat[Recall of Top 10 retrieval]{
		\includegraphics[width=0.3\textwidth]{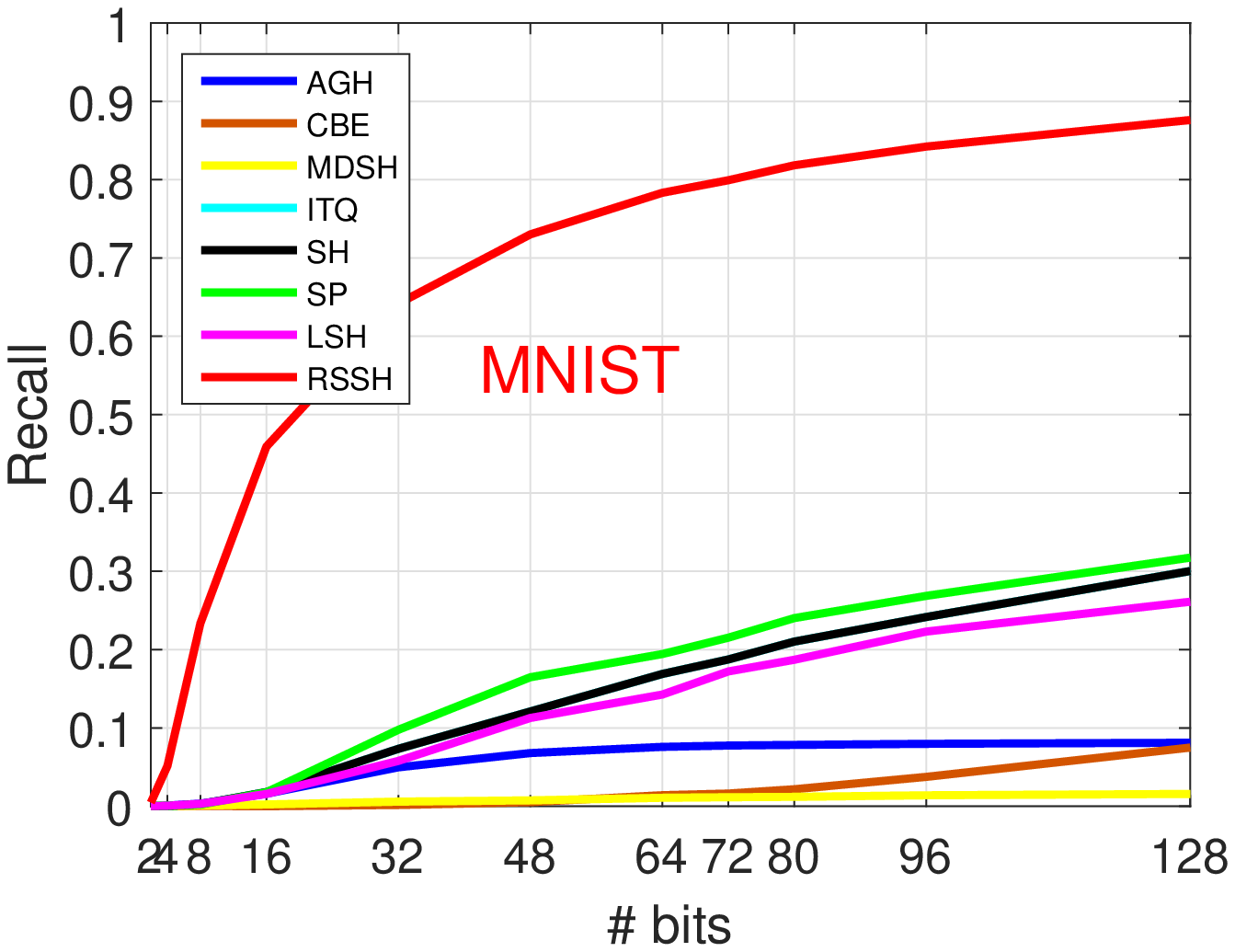}}
	\subfloat[Recall of Top 25 retrieval]{
		\includegraphics[width=0.3\textwidth]{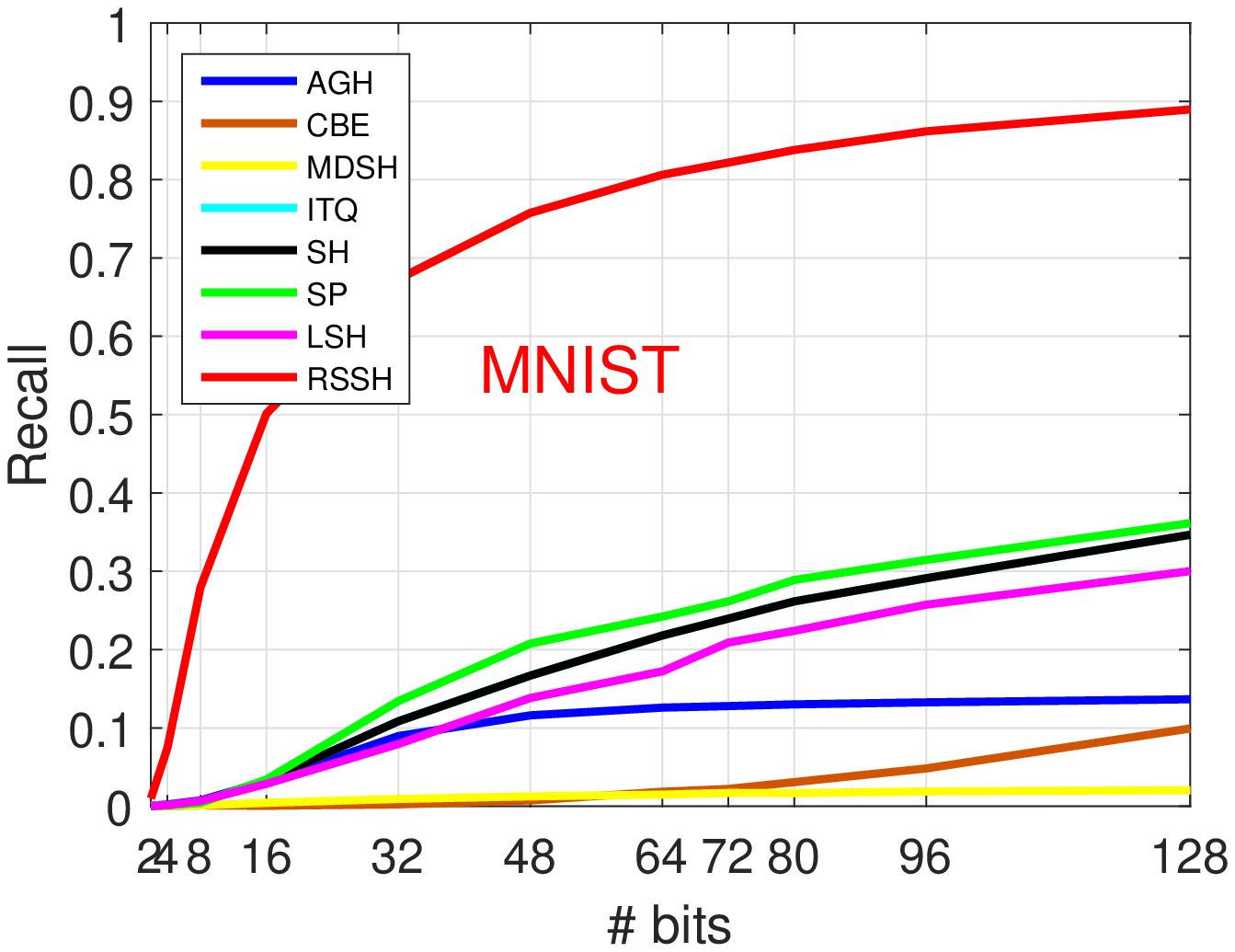}}
	
	\subfloat[Recall of Top 1 retrieval]{
		\includegraphics[width=0.3\textwidth]{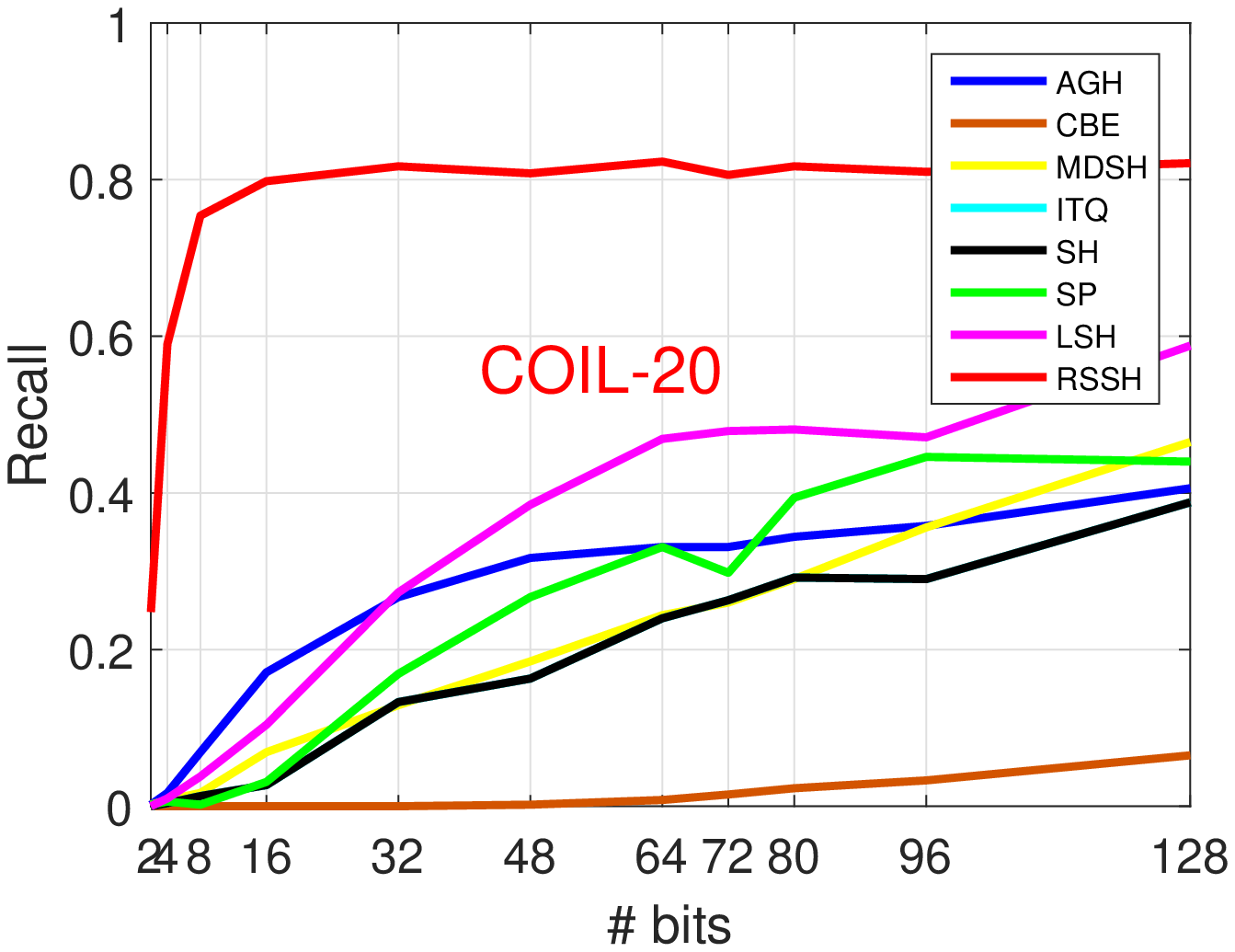}}
	\subfloat[Recall of Top 10 retrieval]{
		\includegraphics[width=0.3\textwidth]{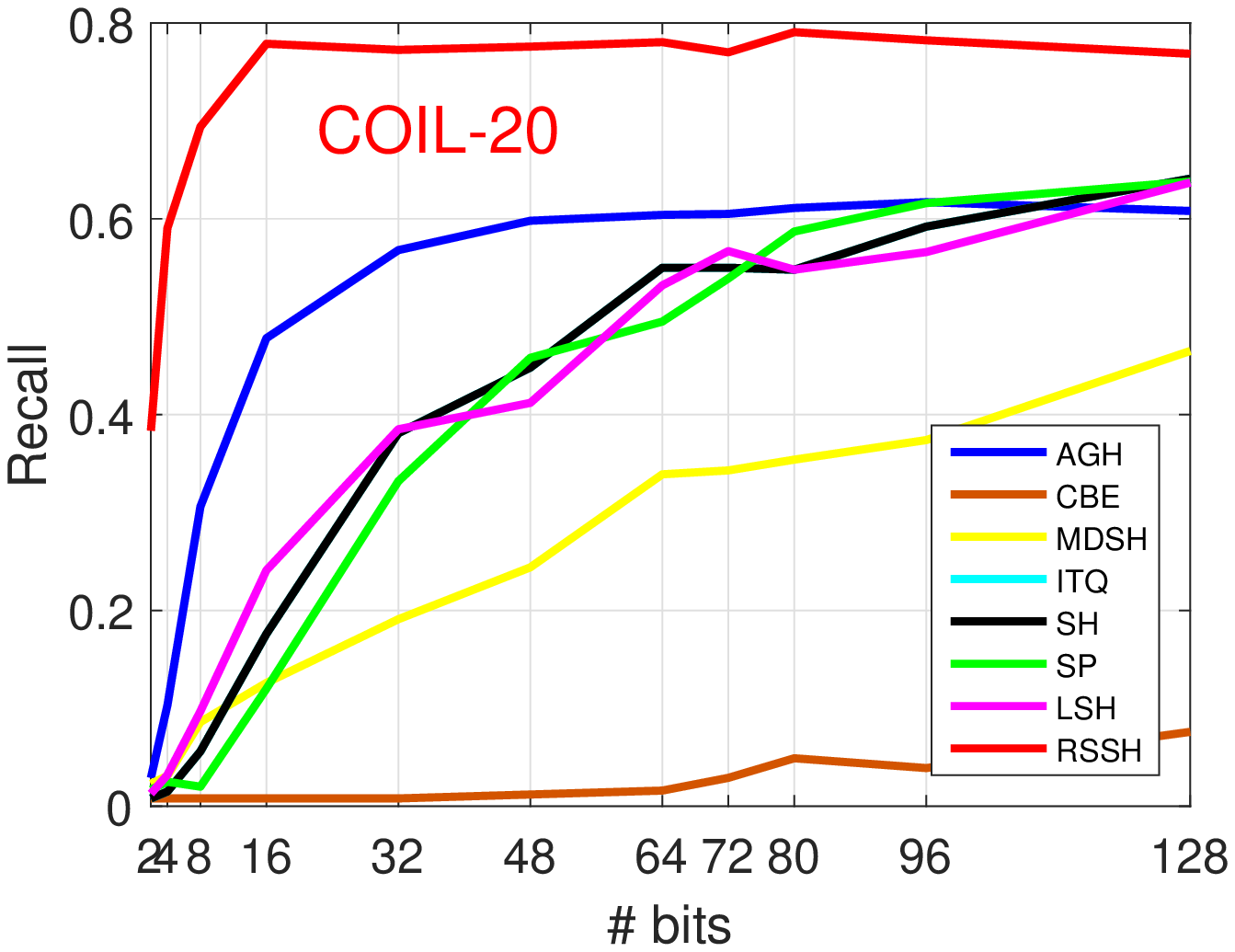}}
	\subfloat[Recall of Top 25 retrieval]{
		\includegraphics[width=0.3\textwidth]{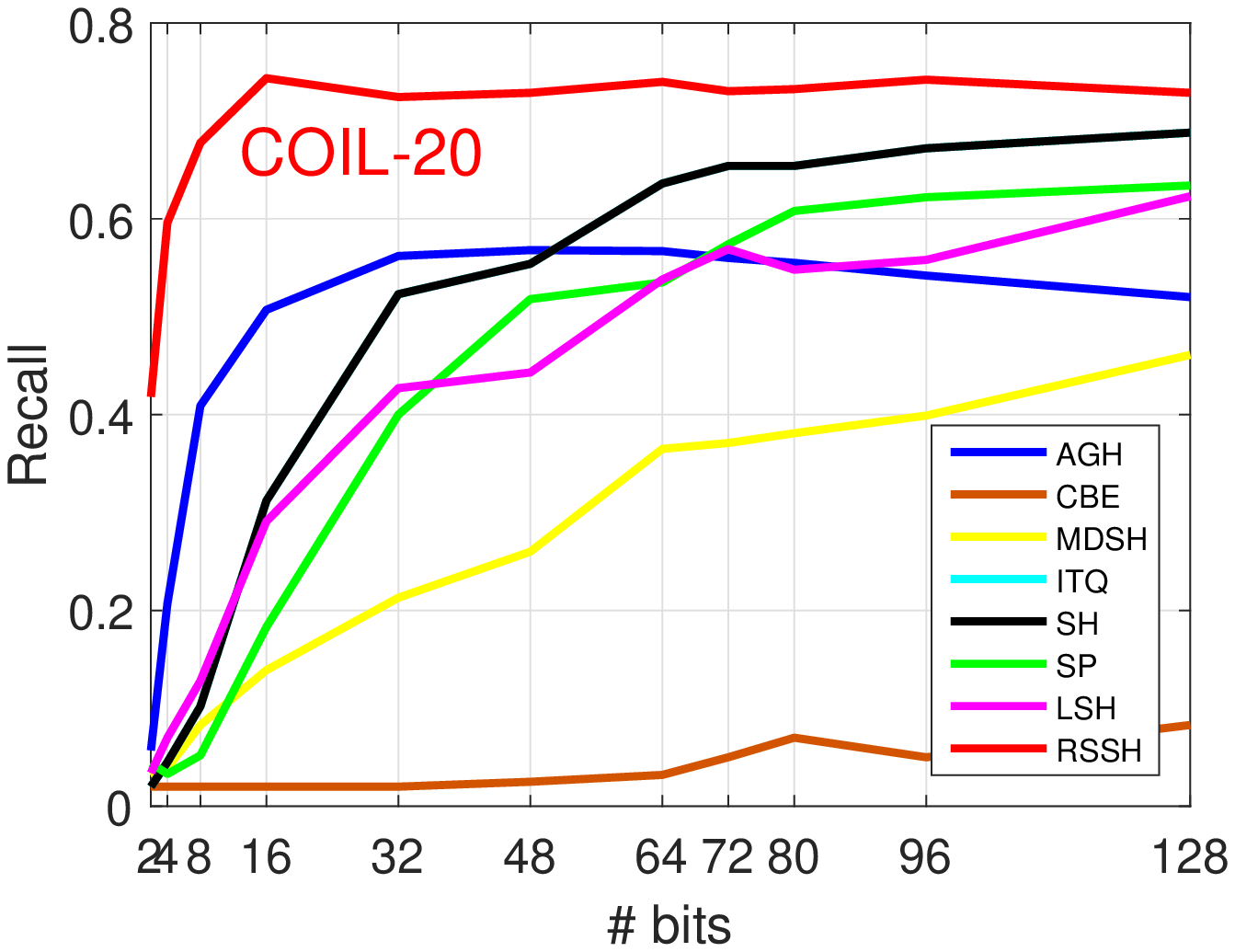}}
	\caption{Classification accuracy and recall in terms of the number of hash bits on three datasets.}
	\label{fig:accu}
\end{figure*}

\begin{figure*}
	\centering
	\subfloat[Precision on the class: Aeroplane]{
		\includegraphics[width=0.3\textwidth]{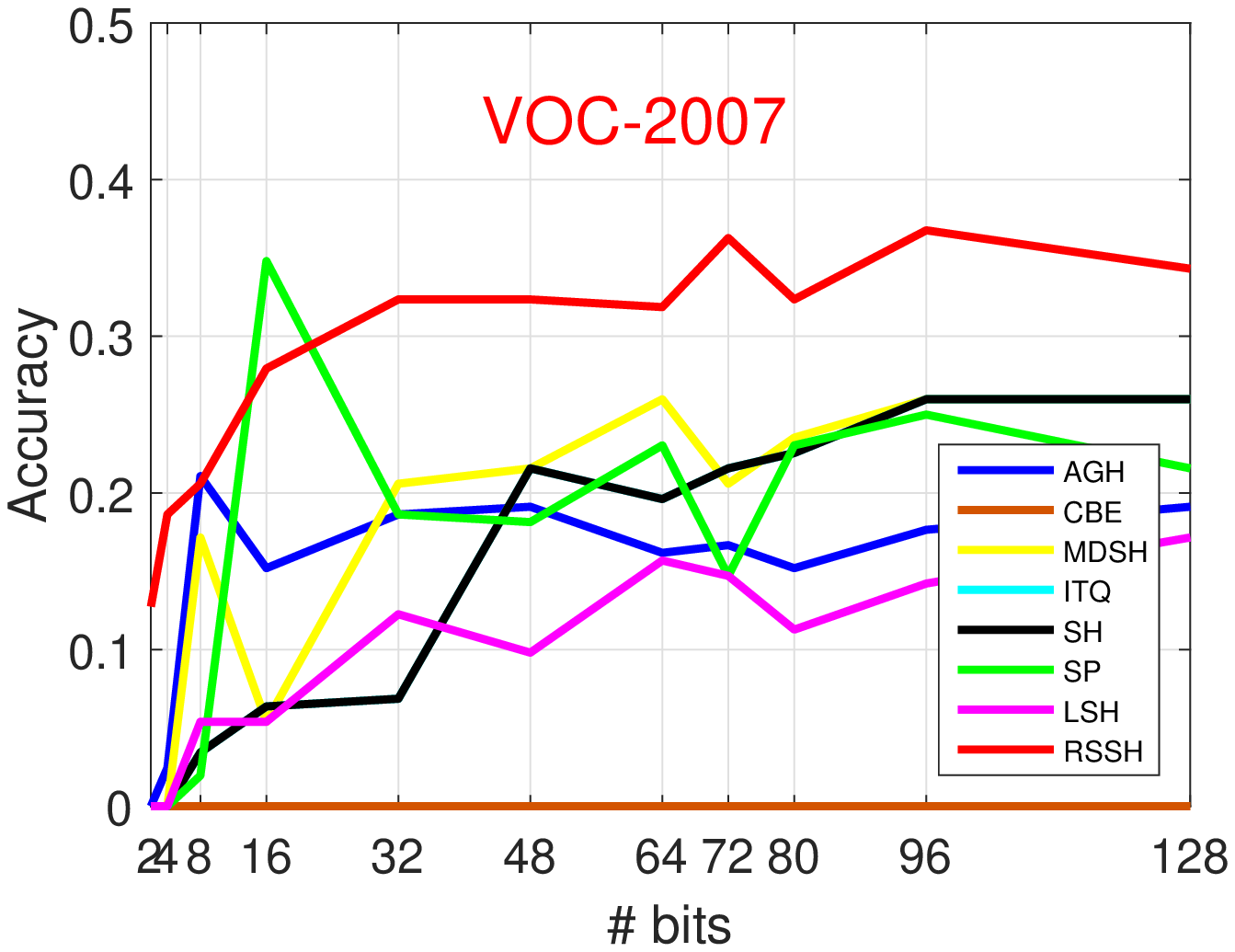}  }
	\subfloat[Precision on the class: Bicycle]{
		\includegraphics[width=0.3\textwidth]{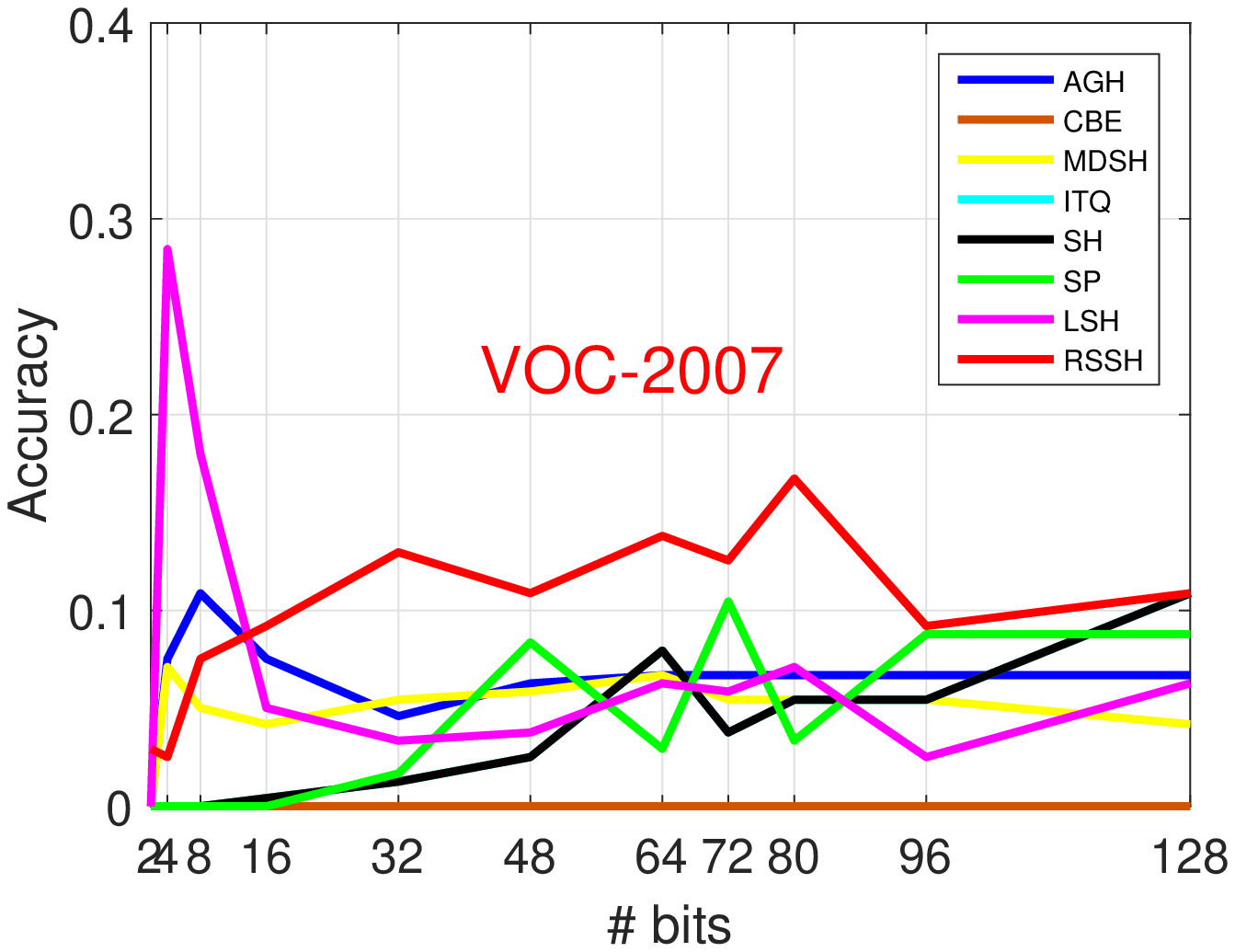}}
	\subfloat[Precision on the class: Bird]{
		\includegraphics[width=0.3\textwidth]{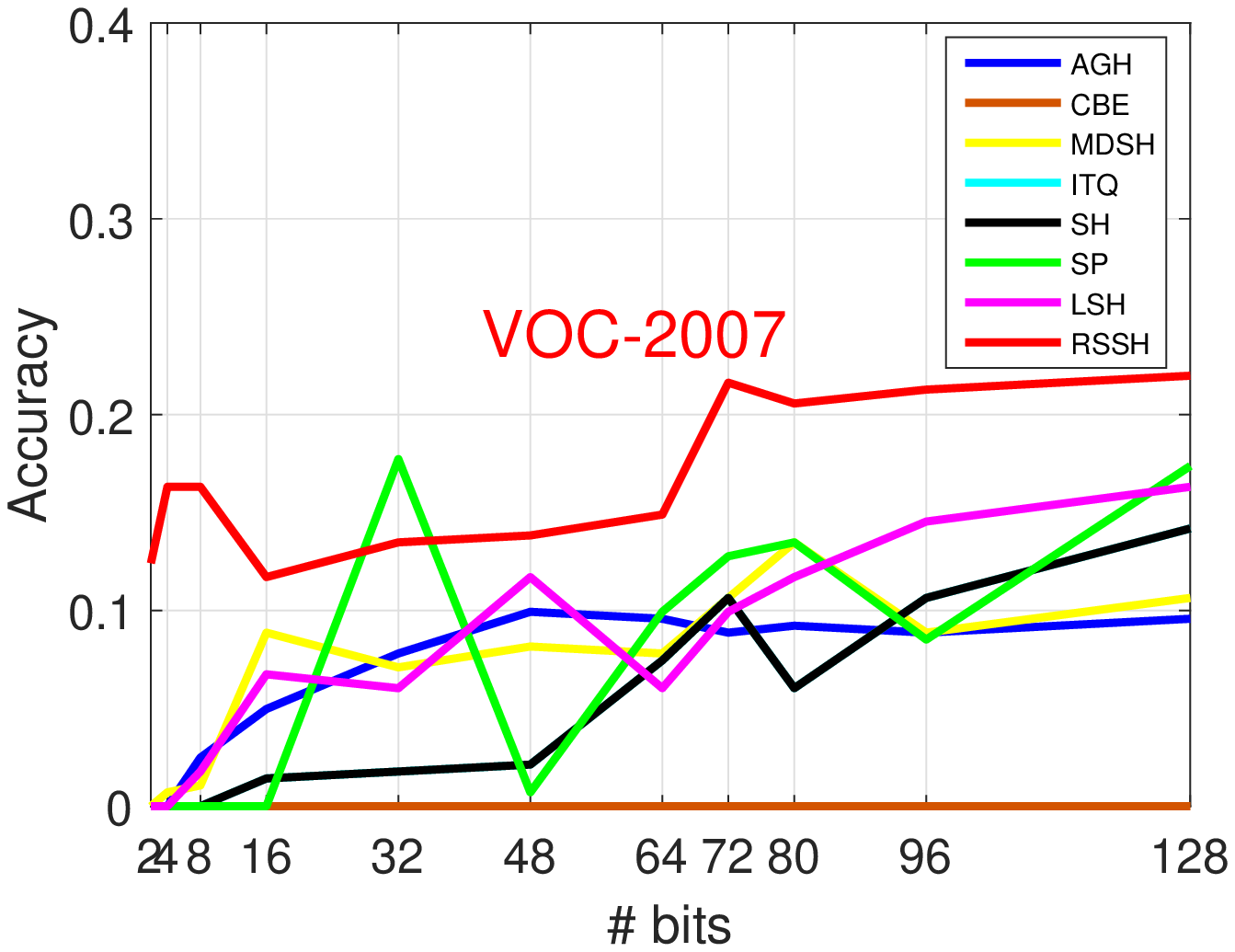}}
	
	\subfloat[Precision on the class: Bus]{
		\includegraphics[width=0.3\textwidth]{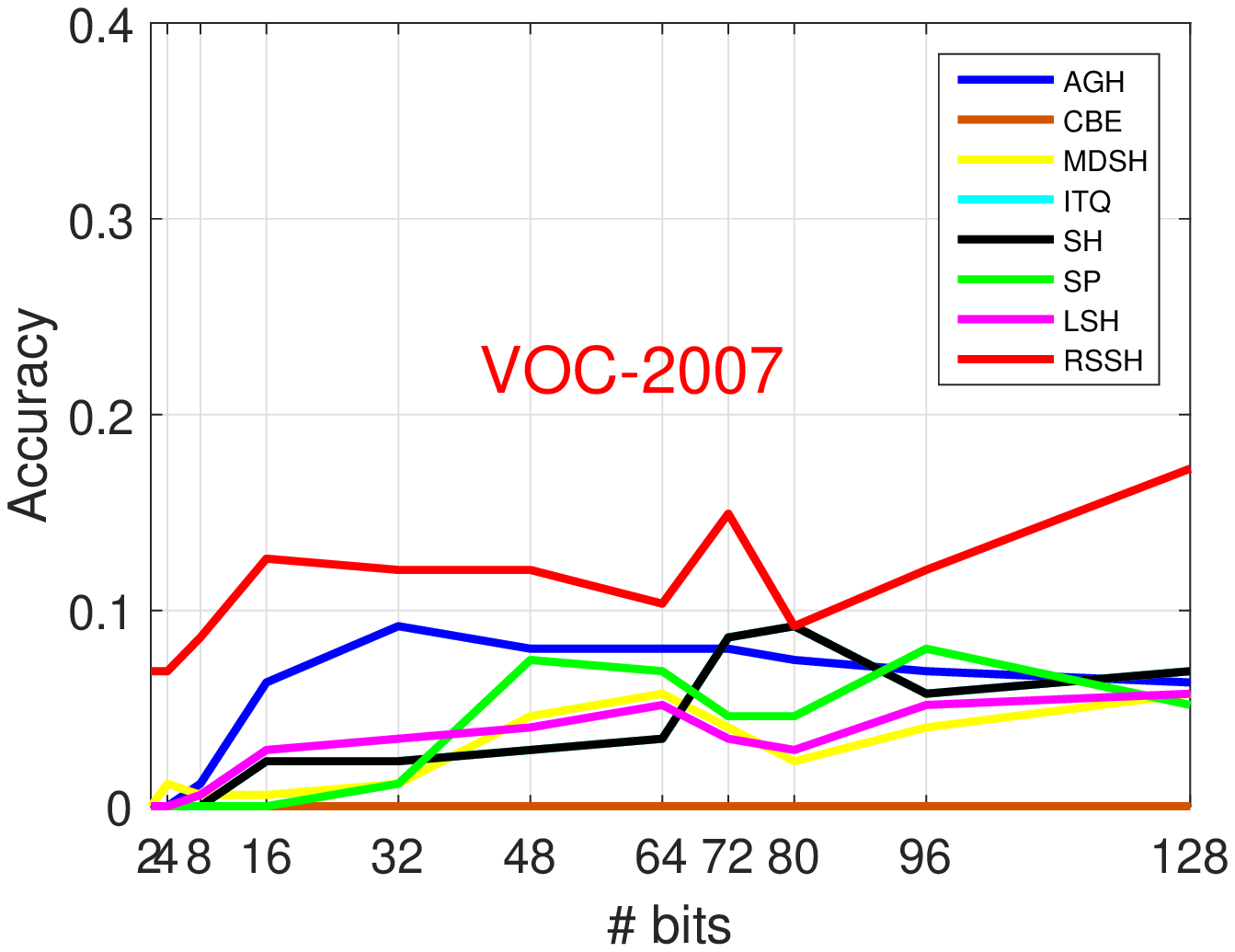}  }\subfloat[Precision on the class: ]{
		\includegraphics[width=0.3\textwidth]{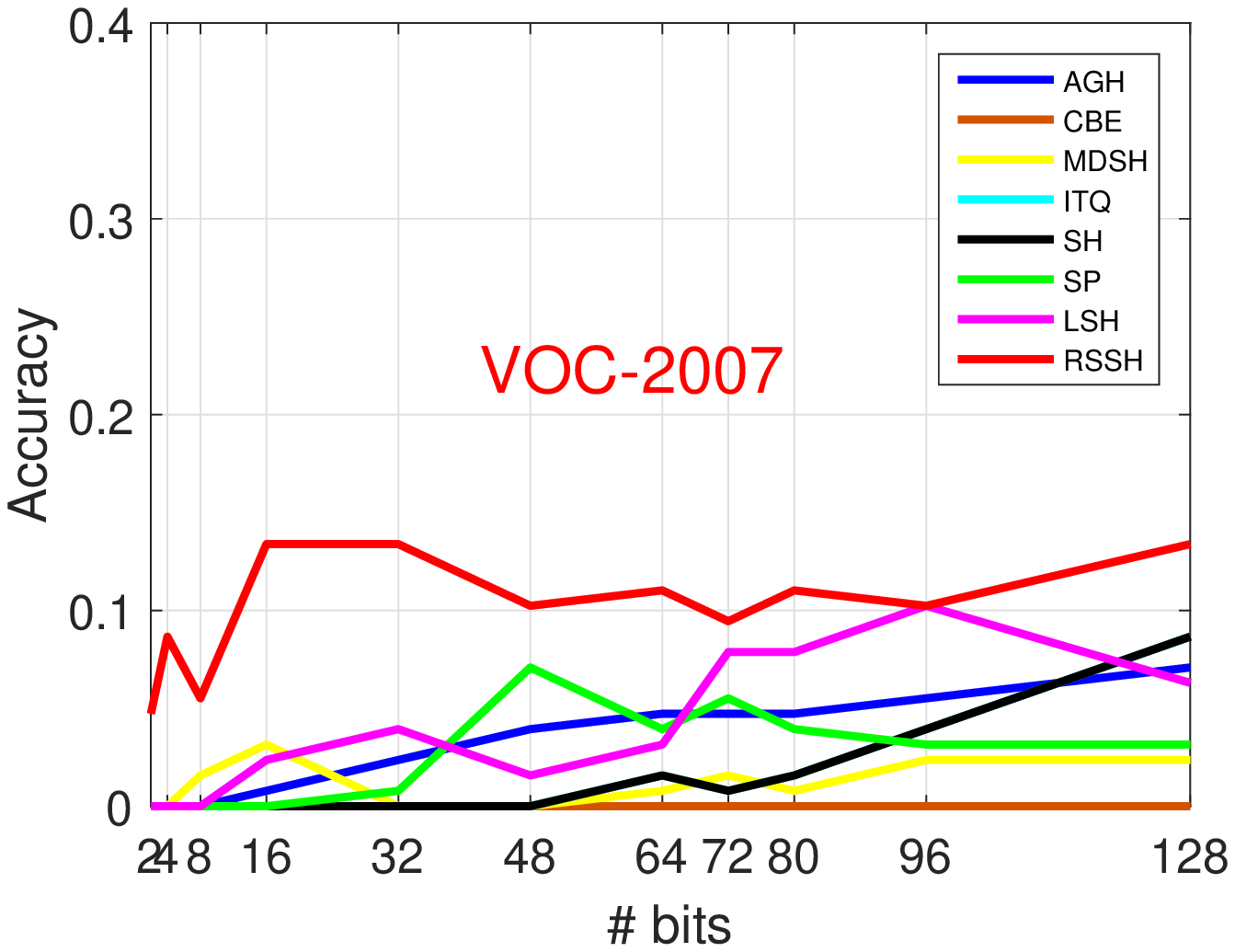}  }
	\subfloat[Precision on the class: Cow]{
		\includegraphics[width=0.3\textwidth]{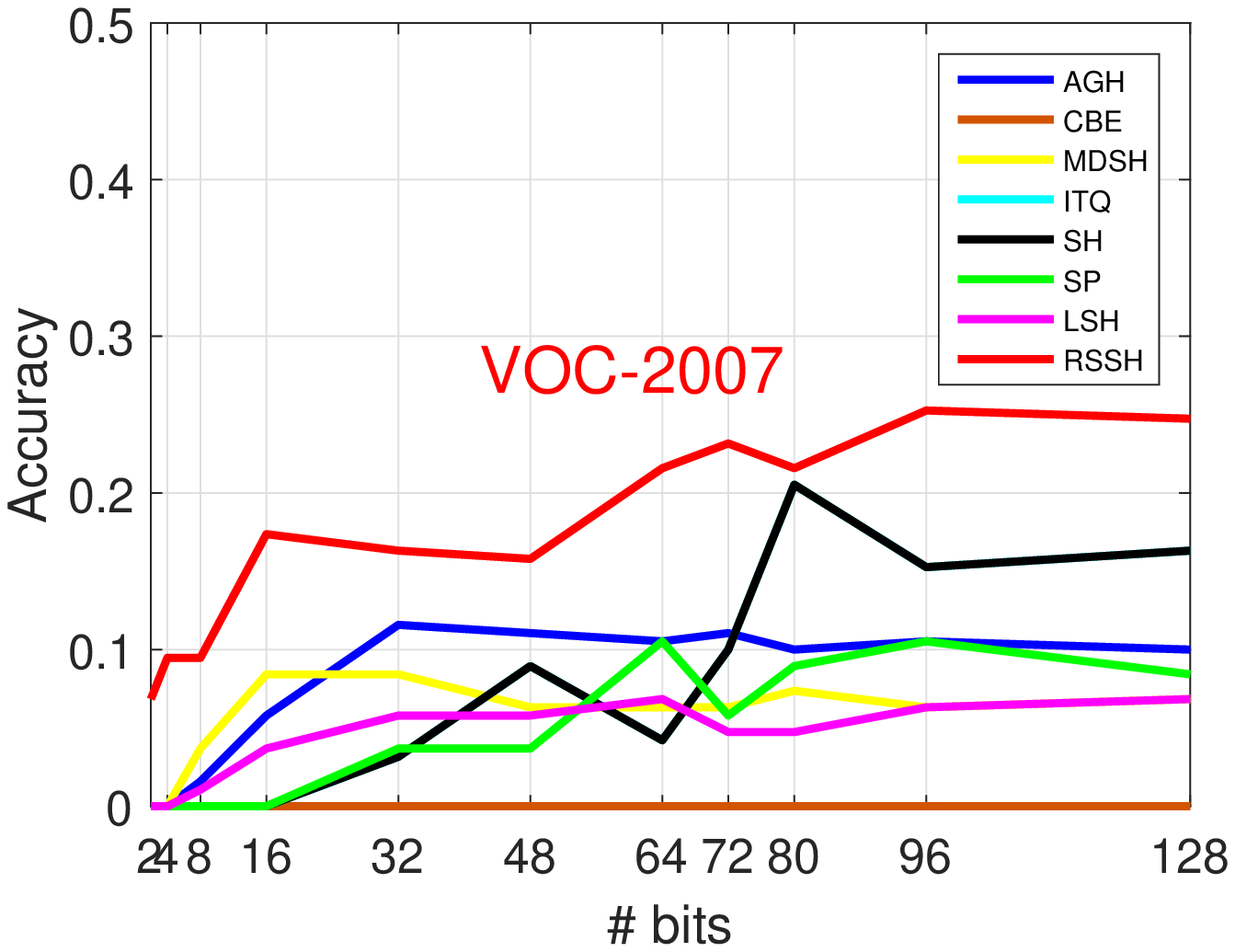}}  
	
	\subfloat[Precision on the class: Dining table]{
		\includegraphics[width=0.3\textwidth]{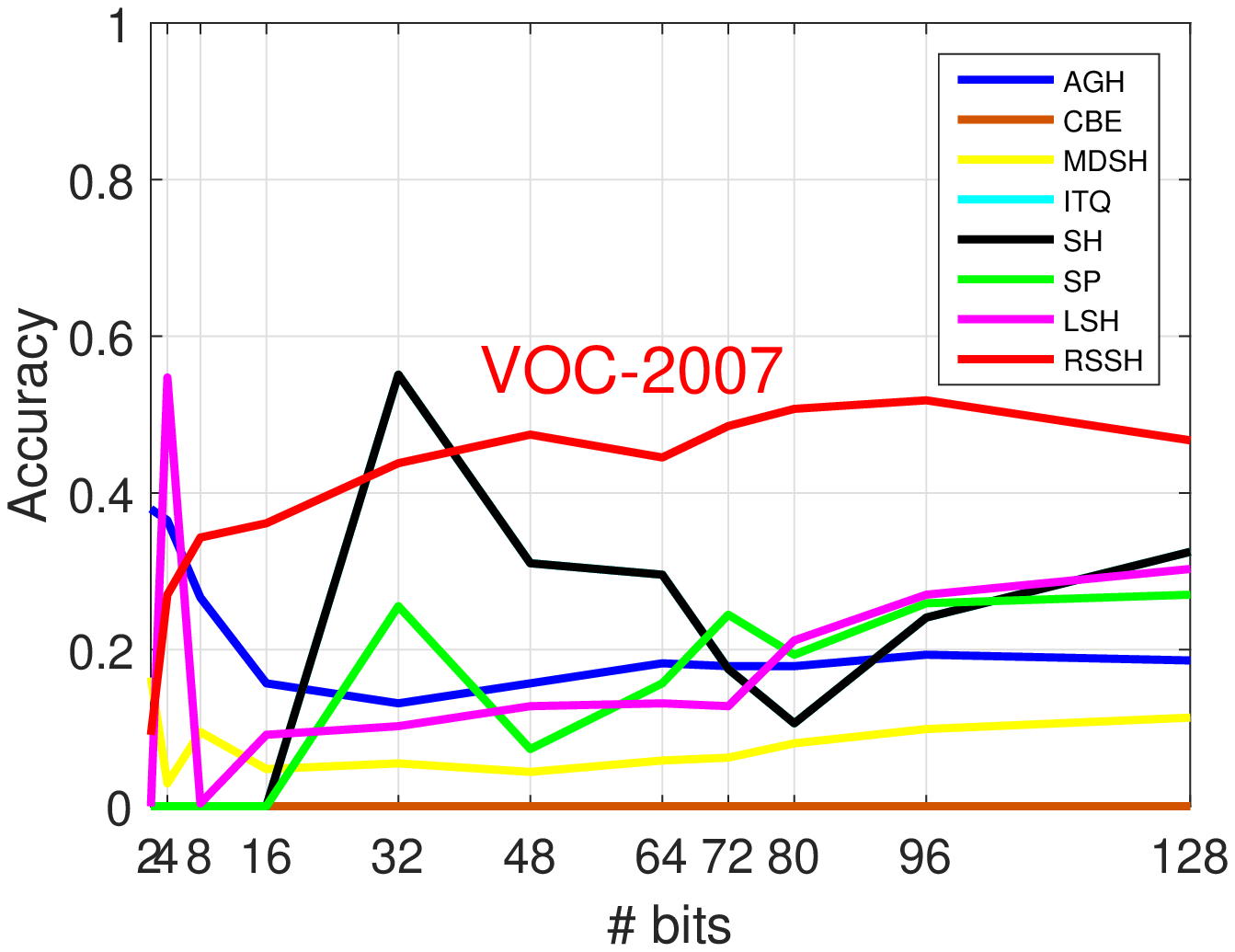}}
	\subfloat[Precision on the class: Horse]{
		\includegraphics[width=0.3\textwidth]{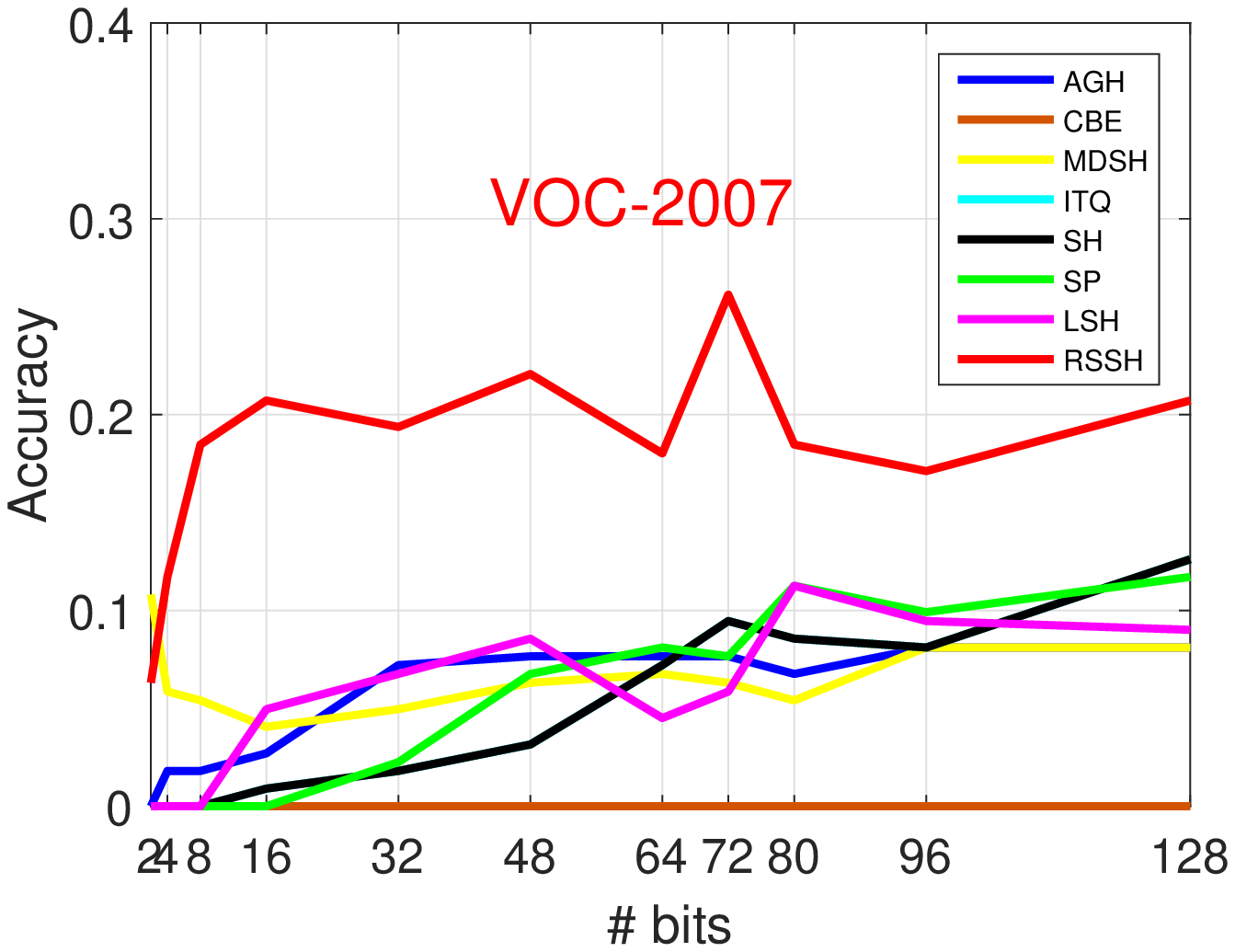}}
	\subfloat[Precision on the class: Motorbike]{
		\includegraphics[width=0.3\textwidth]{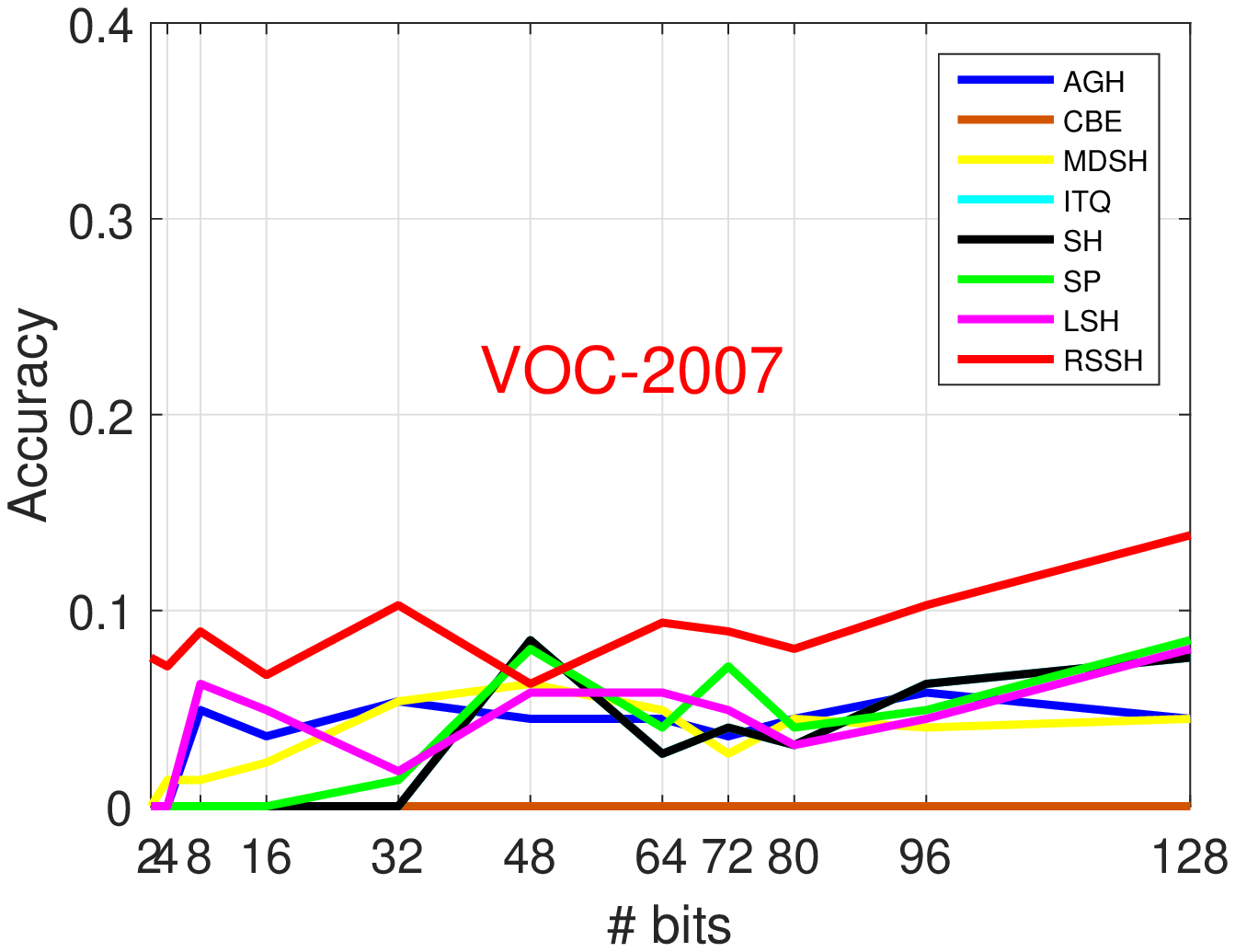}}
	
	\subfloat[Precision on the class: Potted plant]{
		\includegraphics[width=0.3\textwidth]{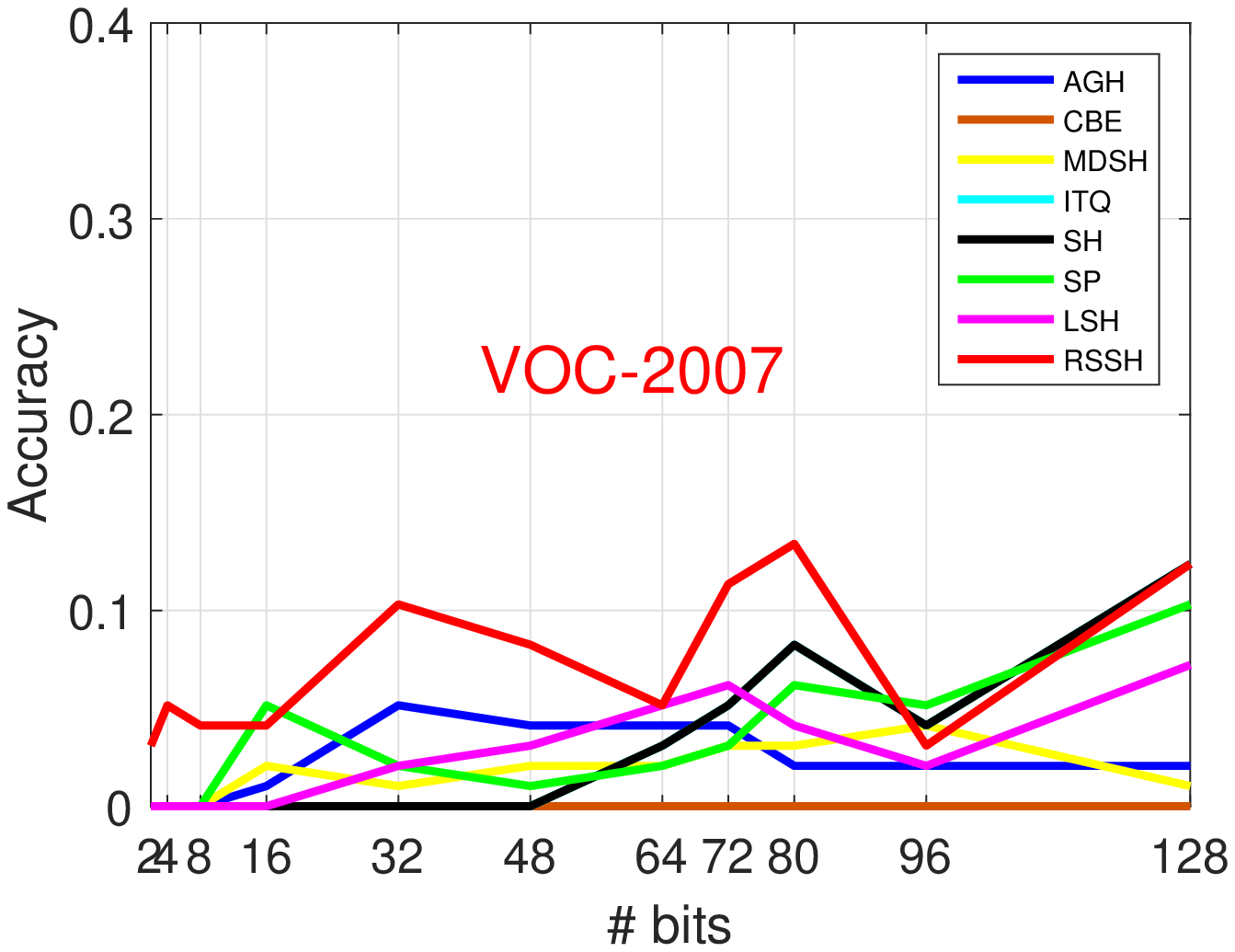}}
	\subfloat[Precision on the class: Train]{
		\includegraphics[width=0.3\textwidth]{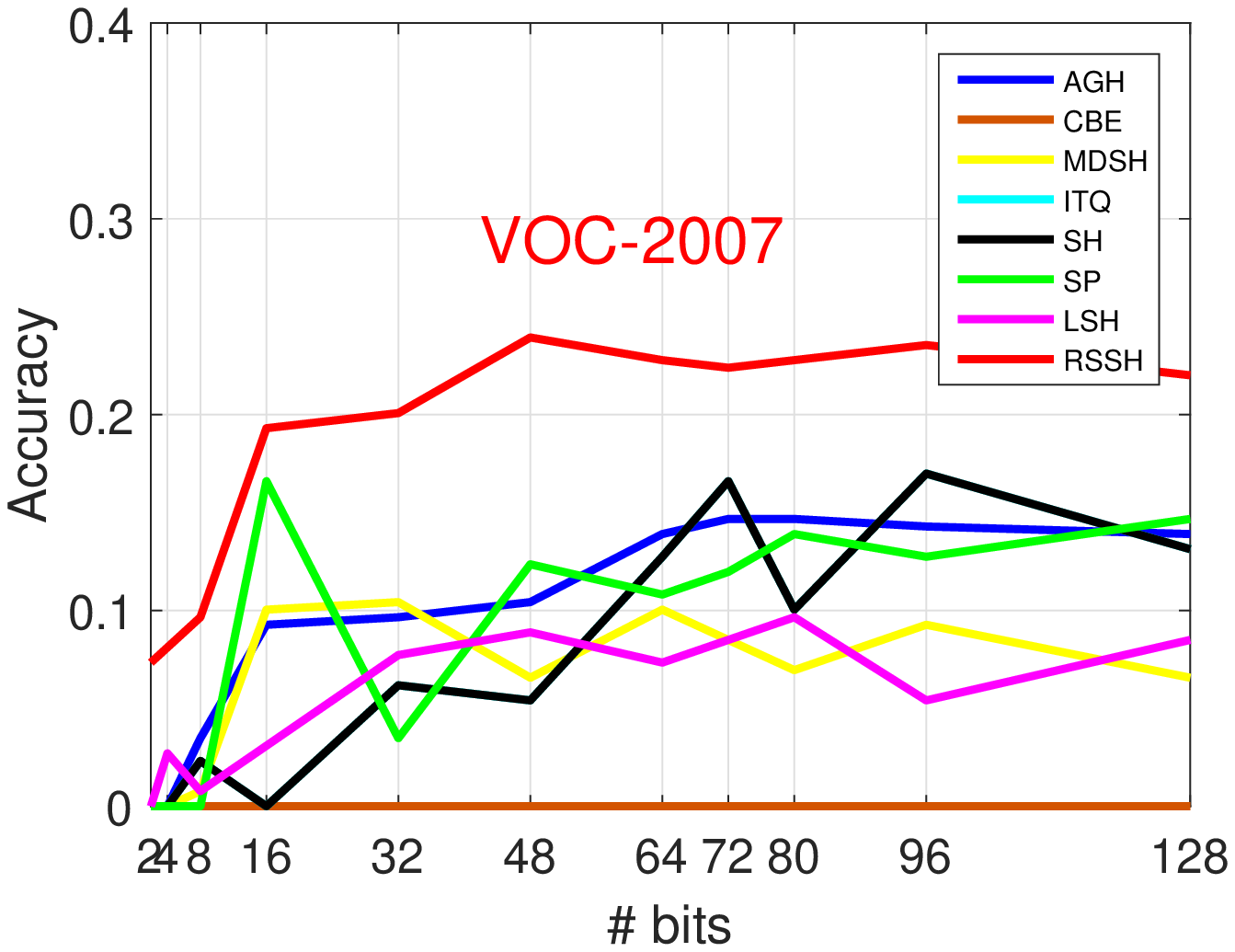}}
	\subfloat[Precision on the class: Tv monitor]{
		\includegraphics[width=0.3\textwidth]{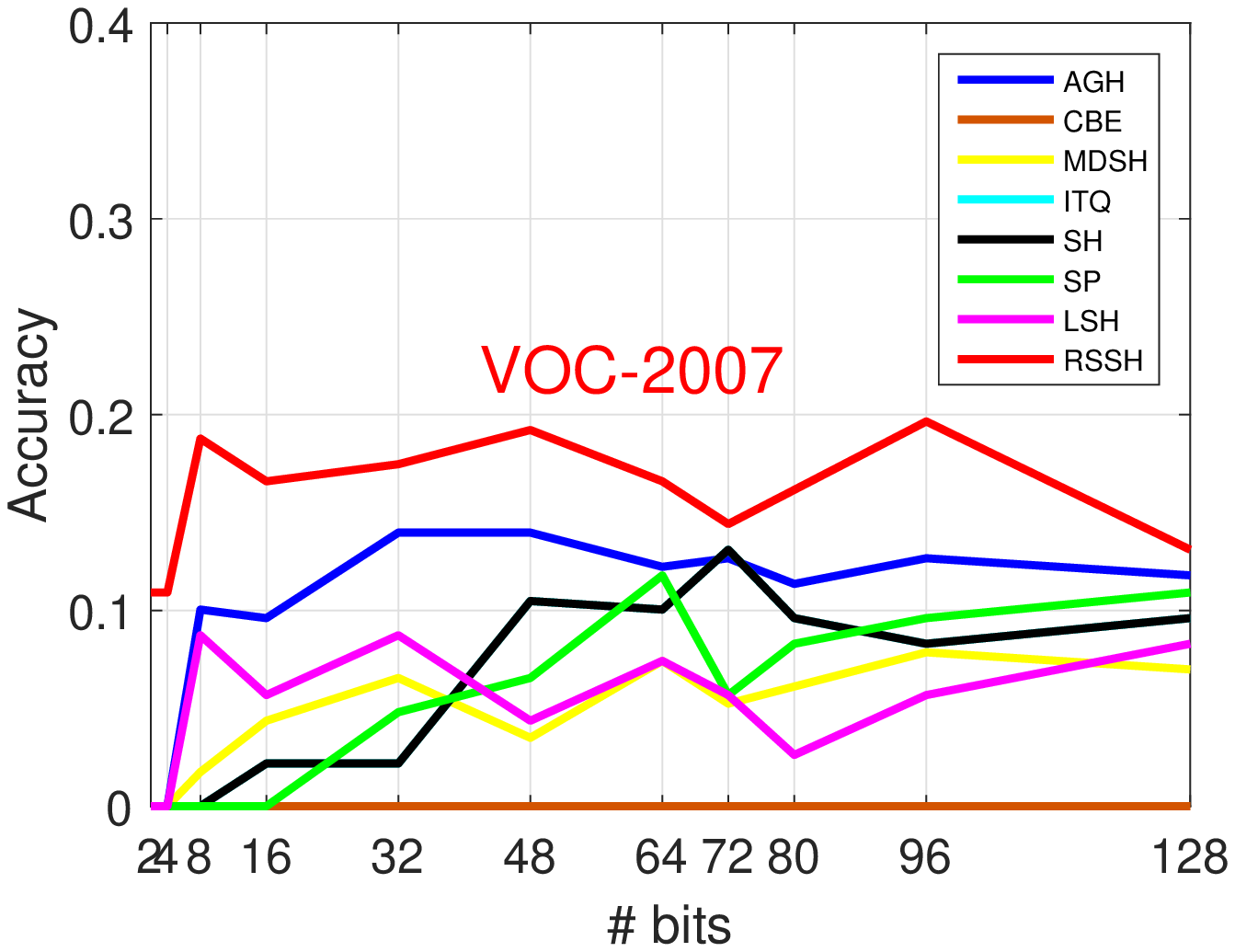}}
	\caption{Precision and recall in terms of the number of hash bits on various classes of VOC2007.}
	\label{fig:precision}
\end{figure*}

\clearpage

\bibliographystyle{apalike}
\bibliography{nnsnlp} 

\end{document}